%% file: example_paper.tex
\documentclass{article}

\usepackage{microtype}
\usepackage{graphicx}
\usepackage{subcaption}
\usepackage{booktabs} 

\usepackage{hyperref}

\usepackage[preprint]{icml2026}

\usepackage{amsmath}
\usepackage{amssymb}
\usepackage{mathtools}
\usepackage{amsthm}

\usepackage[capitalize,noabbrev]{cleveref}

\theoremstyle{plain}

\newtheorem{proposition}{Proposition}
\newtheorem{lemma}{Lemma}
\newtheorem{corollary}{Corollary}
\theoremstyle{definition}
\newtheorem{definition}{Definition}

\theoremstyle{remark}
\newtheorem{remark}{Remark}

\usepackage[textsize=tiny]{todonotes}

\input{preamble}

\icmltitlerunning{Submission and Formatting Instructions for ICML 2026}

\begin{document}

\twocolumn[
    \icmltitle{The Inlet Rank Collapse in Implicit Neural Representations: \\Diagnosis and Unified Remedy}



  \icmlsetsymbol{equal}{*}

  \begin{icmlauthorlist}
    \icmlauthor{Jianqiao Zheng}{yyy}
    \icmlauthor{Hemanth Saratchandran}{yyy}
    \icmlauthor{Simon Lucey}{yyy}
  \end{icmlauthorlist}

  \icmlaffiliation{yyy}{Australian Institute for Machine Learning, Adelaide University, Australia}

  \icmlcorrespondingauthor{Jianqiao Zheng}{jianqiao.zheng@adelaide.edu.au}

  \icmlkeywords{Implicit Neural Representations, Neural Tangent Kernel, Initialization}

  \vskip 0.3in
]



\printAffiliationsAndNotice{}  
\begin{abstract} Implicit Neural Representations (INRs) have revolutionized continuous signal modeling, yet they struggle to recover fine-grained details within finite training budgets. While empirical techniques, such as positional encoding (PE), sinusoidal activations (SIREN), and batch normalization (BN), effectively mitigate this, their theoretical justifications are predominantly \textit{post hoc}, focusing on the \textit{global} NTK spectrum only \textit{after} modifications are applied. In this work, we reverse this paradigm by introducing a structural diagnostic framework. By performing a layer-wise decomposition of the NTK, we mathematically identify the \textit{``Inlet Rank Collapse''}: a phenomenon where the low-dimensional input coordinates fail to span the high-dimensional embedding space, creating a fundamental rank deficiency at the first layer that acts as an expressive bottleneck for the entire network. This framework provides a unified perspective to re-interpret PE, SIREN, and BN as different forms of rank restoration. Guided by this diagnosis, we derive a \textit{Rank-Expanding Initialization}, a minimalist remedy that ensures the representation rank scales with the layer width without architectural modifications or computational overhead. Our results demonstrate that this principled remedy enables standard MLPs to achieve high-fidelity reconstructions, proving that the key to empowering INRs lies in the structural optimization of the initial rank propagation to effectively populate the latent space. \end{abstract}

\section{Introduction}
\label{sec:intro/first}

\begin{figure}[t]
    \centering
    \subfloat[\centering Default ReLU (Low-rank inlet)]{\includegraphics[width=0.45\linewidth]{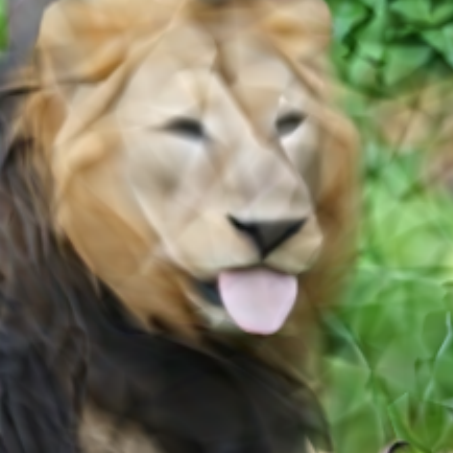}}%
    \qquad
    \subfloat[\centering Rank-Expanding Init (High-rank inlet)]{\includegraphics[width=0.45\linewidth]{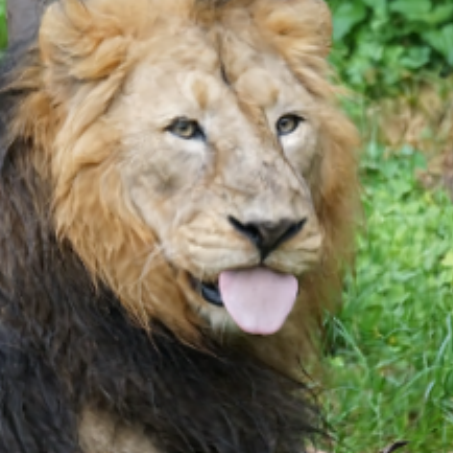}}%
    \caption[Inlet Rank Restoration Unlocks ReLU MLP Capacity.]{
    \textbf{Inlet Rank Restoration Unlocks ReLU MLP Capacity.} 
    Reconstruction results of a standard ReLU-activated MLP. By modifying \textit{only} the initialization of the first layer to expand representation rank, the PSNR improves significantly from $24.14$ (left) to $27.69$ (right). This pure initialization-based gain, without any architectural changes, identifies the Inlet Rank Collapse as the primary bottleneck to the capacity of vanilla INRs.}
    \label{fig:front/first}%
\end{figure}

Implicit Neural Representations (INRs) have emerged as a powerful paradigm for representing continuous signals, such as 3D scenes in neural radiance fields (NeRF)~\cite{mildenhall2021nerf} and high-resolution images, by parameterizing them as coordinate-based MLPs. Despite their success, these models suffer from a fundamental challenge known as ``spectral bias,''~\cite{rahaman2019spectral} prioritizing low-frequency components and struggling to capture fine-grained details within a finite training budget. 

To date, the literature has focused on empirical heuristics to overcome these bottlenecks, including positional encoding (PE)~\cite{tancik2020fourier}, periodic activations (SIREN)~\cite{sitzmann2020implicit}, and architectural modifications like Batch Normalization (BN)~\cite{cai2024batch}. While effective, their theoretical justifications are predominantly \textit{post hoc}. Most studies analyze the Neural Tangent Kernel (NTK)~\cite{jacot2018neural} as a \textit{global} black box only \textit{after} modifications are applied. Consequently, it remains structurally unclear \textit{where} the failure originates.

In this work, we propose a Structural Diagnostic Paradigm for analyzing the optimization dynamics of neural architectures. Unlike conventional studies, our framework enables a layer-wise dissection of the NTK, allowing us to surgically localize where the optimization potential for high-frequency signals is compromised. While generalizable to broader architectures, we apply this tool to INRs to uncover a fundamental flaw: the Inlet Rank Collapse. We demonstrate that since input coordinates reside in a low-dimensional space, the first-layer Jacobian is inherently restricted to a low-rank manifold under standard initializations. This acts as an optimization gatekeeper, preventing the coordinates from effectively populating the high-dimensional latent space and thus attenuating the gradient flow for complex signal components at the source.

Our diagnostic framework provides a unified theoretical lens for the INR landscape. We prove that PE, SIREN, and BN are essentially various forms of rank restoration, designed to bypass the inlet bottleneck. Building on this, we derive a Rank-Expanding Initialization, a minimalist, architecture-agnostic remedy that requires no modifications and zero computational overhead. By resolving the rank propagation issue from the first layer, we empower standard MLPs to achieve high-fidelity representation previously thought to require complex architectural ``band-aids.''

Our main contributions are as follows:
\begin{itemize}[leftmargin=*,noitemsep] \item \textbf{Structural Diagnostic Framework:} We introduce a layer-wise NTK decomposition paradigm that enables a principled analysis of each block's contribution to the global gradient manifold, allowing for the surgical localization of optimization bottlenecks. \item \textbf{Inlet Rank Collapse:} We formalize the \textit{Inlet Rank Collapse}, a phenomenon where low-dimensional coordinate inputs fail to occupy the high-dimensional embedding space, creating a rank deficiency that stifles the propagation of high-bandwidth information. \item \textbf{Unified Theory and Rank-Expanding Remedy:} We provide a unified re-interpretation of PE, SIREN, and BN as various forms of rank restoration. Building on this, we derive a \textbf{Rank-Expanding Initialization}, a minimalist, zero-overhead remedy that facilitates high-rank spatial occupancy at the inlet to unlock the network's capacity. \end{itemize}

\section{Related Work}

\textbf{Implicit Neural Representations.} INRs parameterize signals as continuous functions, achieving remarkable success in shape modeling \cite{park2019deepsdf} and neural rendering \cite{mildenhall2021nerf}. To capture high-fidelity details, existing empirical techniques can be categorized into three types: \textbf{i) Input embedding}, such as Random Fourier Features \cite{tancik2020fourier}, general positional encoding~\cite{zheng2021rethinking}, Hash Encoding \cite{muller2022instant}, and DINER \cite{Xie_2023_CVPR}; \textbf{ii) Alternative activation functions}, such as periodic activations (SIREN) \cite{sitzmann2020implicit}, Gaussian \cite{ramasinghe2022beyond}, H-SIREN \cite{gao2024h}, and FINER \cite{Liu_2024_CVPR}; and \textbf{iii) Architectural modifications}, such as Batch Normalization (BN) \cite{cai2024batch} and high-order methods like HOIN \cite{chen2024hoinhighorderimplicitneural}. However, these methods are often treated as disparate architectural choices. In contrast, our work provides a unified theoretical framework to understand their collective success through the lens of rank restoration.

\textbf{NTK and Spectral Bias.} The Neural Tangent Kernel (NTK) \cite{jacot2018neural} has been instrumental in analyzing the learning dynamics of deep networks. Previous studies \cite{rahaman2019spectral, cao2019towards} identified ``spectral bias'' as a fundamental limitation where networks prioritize low-frequency components. While several aforementioned methods \cite{tancik2020fourier, cai2024batch, chen2024hoinhighorderimplicitneural, Liu_2024_CVPR} leverage NTK to explain their efficacy, their analyses are predominantly \textit{global} and \textit{post hoc}. We extend this by introducing a \textit{layer-wise} decomposition, allowing us to surgically localize the optimization bottleneck at the initial feature Jacobian.

\textbf{Initialization Strategies.} Proper initialization is crucial for the convergence of deep MLPs. While standard strategies like Xavier \cite{glorot2010understanding} and Kaiming \cite{he2015delving} focus on stabilizing signal variance, they do not account for the rank-deficiency issues inherent in coordinate-based inputs. Recent studies have explored specialized initializations for INRs \cite{sitzmann2020implicit}, yet they do not address the structural rank collapse. Our proposed Rank-Expanding Initialization fills this gap by ensuring that the initial feature Jacobian possesses sufficient rank to transmit high-bandwidth information from the very first layer.

\section{Structural Diagnostic Framework}
\label{sec:jac/first}

In this section, we introduce a diagnostic framework designed to localize optimization bottlenecks within neural architectures. Unlike conventional global analysis, we explicitly distinguish two types of derivatives to facilitate a structured decomposition of the Neural Tangent Kernel (NTK): \textbf{Feature Jacobians} with respect to intermediate representations, and \textbf{Weight Jacobians} with respect to model parameters. We demonstrate that the total NTK rank is fundamentally bounded by these layerwise contributions, enabling a fine-grained analysis of how rank collapse propagates through hidden layers.

\subsection{Preliminaries for the Neural Tangent Kernel}
\label{sec:pre_ntk/first}
Consider a neural network $f\:{:}\:\mathbb{R}^{P} \:{\rightarrow}\: \mathbb{R}^{M}$ evaluated on $N$ input samples
$\X \:{\in}\: \mathbb{R}^{N \times P}$ with learnable parameter $\theta$.
The Neural Tangent Kernel is defined as
\begin{equation}
    \label{equ:ntk_kernel/first}
    \K
    =
    \left( \nabla_{\theta} f(\X; \theta) \right)^\top
    \nabla_{\theta} f(\X; \theta),
\end{equation}
where $f(\X; \theta)$ denotes the network output and $\nabla_{\theta} f$ is the derivative of the output with respect to the parameters.

When the network width tends to infinity and training is performed using gradient descent with a sufficiently small learning rate $\eta$ under an $\ell_2$ loss, the training dynamics can be approximated by a linear kernel regression model.
In the discrete-time setting, the network output after $n$ training steps satisfies~\cite{lee2019wide}
\begin{equation}
    \label{equ:ntk_convergence/first}
    \Y^{(n)} - \hat{\Y}
    \approx
    (\I - \eta \K)^n
    (\Y^{(0)} - \hat{\Y}),
\end{equation}
where $\I$ is the identity matrix, $\Y^{(0)}$ is the initial output, and $\hat{\Y}$ is the target output.
Since the NTK matrix admits an eigendecomposition $\K \:{=}\: \Q \Lambda \Q^\top$, the error evolution can be written as
\begin{equation}
    \label{equ:error/first}
    \Y^{(n)} - \hat{\Y}
    \approx
    \Q (\I - \eta \Lambda)^n \Q^\top
    (\Y^{(0)} - \hat{\Y}).
\end{equation}
This expression shows that the initial error is decomposed along the eigenspaces of $\K$, and each eigendirection decays at a rate determined by its corresponding eigenvalue.

\subsection{Distinguishing Feature and Weight Jacobians}
\label{sec:def_Jg}

The internal structure of the NTK can be made explicit by applying the chain rule, which reveals the interplay between state transitions and parameter influences. We formalize this by defining the following two quantities:

\begin{definition}
\textbf{Feature Jacobian.} We define $\J^{i}_{j} \coloneqq \frac{\partial \, \mathrm{vec}(\Z_{i})}{\partial \, \mathrm{vec}(\Z_{j})}$ as the derivative of the $i$-th layer output with respect to the representation at layer $j$. For adjacent layers, we denote $\J_{i} \coloneqq \J^{i}_{i-1}$ as the \textit{layerwise Feature Jacobian}, which dictates how information rank is transmitted from $\Z_{i-1}$ to $\Z_{i}$.
\end{definition}

\begin{definition}
\textbf{Weight Jacobian.} We define $\G^{i}_{j} \coloneqq \frac{\partial \, \mathrm{vec}(\Z_{i})}{\partial \, \mathrm{vec}(\theta_{j})}$ as the derivative of the representation at layer $i$ with respect to the parameters of layer $j$. For the $i$-th layer itself, we denote $\G_{i} \coloneqq \G^{i}_{i}$ as the \textit{layerwise Weight Jacobian}, which represents the direct sensitivity of the layer to its own parameters.
\end{definition}

Under this notation, the overall parameter gradient matrix $\G^L_{\text{all}}$, which forms the backbone of the NTK in \cref{equ:ntk_kernel/first}, can be systematically decomposed.

\subsection{The Rank Constraint Principle}
\label{sec:jac_layer/first}
The overall gradient matrix $\G^{L}_{\mathrm{all}}$ can be written as the horizontal concatenation of gradients with respect to the parameters of individual layers:
\begin{equation}
\label{equ:G_all_def}
    \G^{L}_{\mathrm{all}}
    =
    \left[
        \G^L_L \;\;
        \G^L_{L-1} \;\;
        \cdots \;\;
        \G^L_{1}
    \right].
\end{equation}

Each matrix $\G^L_k$ has the same number of rows as $\mathrm{vec}(\Z_{L})$, while its number of columns equals the number of parameters in $\theta_k$.

\begin{proposition}
\label{prop:concatenation/first}
The (exact) rank of the overall gradient matrix $\G^{L}_{\mathrm{all}}$ satisfies
\begin{equation}
\label{equ:concatenation/first}
    \max_{k=1,\dots,L} \mathrm{rank}(\G^{L}_{k})
    \;\leq\;
    \mathrm{rank}(\G^{L}_{\mathrm{all}})
    \;\leq\;
    \sum_{k=1}^{L} \mathrm{rank}(\G^{L}_{k}).
\end{equation}
\end{proposition}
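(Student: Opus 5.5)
Both inequalities are pure linear algebra facts about the column space of a horizontally concatenated matrix. The plan is to work with column spaces throughout. By \cref{equ:G_all_def}, the columns of $\G^{L}_{\mathrm{all}}$ are exactly the union of the columns of the blocks $\G^L_L,\dots,\G^L_1$, since all blocks share the same row dimension (that of $\mathrm{vec}(\Z_L)$). Writing $\mathcal{C}(\cdot)$ for the column space, I will establish
\begin{equation*}
    \mathcal{C}(\G^{L}_{\mathrm{all}}) \;=\; \mathcal{C}(\G^{L}_{1}) + \mathcal{C}(\G^{L}_{2}) + \cdots + \mathcal{C}(\G^{L}_{L}),
\end{equation*}
a sum of subspaces of $\mathbb{R}^{\dim \mathrm{vec}(\Z_L)}$. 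This holds because any linear combination of columns of $\G^{L}_{\mathrm{all}}$ splits into a sum of linear combinations of the columns of each block, and conversely.

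For the \textbf{lower bound}, fix any $k$. The inclusion $\mathcal{C}(\G^L_k) \subseteq \mathcal{C}(\G^{L}_{\mathrm{all}})$ is immediate, so monotonicity of dimension gives $\mathrm{rank}(\G^L_k) \le \mathrm{rank}(\G^{L}_{\mathrm{all}})$. Taking the maximum over $k$ yields the left inequality.

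For the \textbf{upper bound}, I will use subadditivity of dimension for sums of subspaces, $\dim(U+V) \le \dim U + \dim V$. This follows from $\dim(U+V) = \dim U + \dim V - \dim(U\cap V)$, or simply from the fact that the union of a basis of $U$ and a basis of $V$ spans $U+V$. Induction on the number of summands then gives $\dim \sum_k \mathcal{C}(\G^L_k) \le \sum_k \dim \mathcal{C}(\G^L_k)$, which is the right inequality.

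No step is a genuine obstacle. The only point needing care is the bookkeeping: confirming that the blocks have a common row count so that the concatenation makes sense, and that ``rank'' means exact (algebraic) rank rather than a numerical or effective rank. With any thresholded notion of rank the subadditivity argument would need modification, so I will state explicitly that the claim concerns exact rank, as the proposition's parenthetical indicates.
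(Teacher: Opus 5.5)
Your proposal is correct and follows the paper's reasoning: the lower bound holds because the concatenation retains the column space of every block, and the upper bound holds by subadditivity of dimension, with equality when the block column spaces are independent. Your identity $\mathcal{C}(\G^{L}_{\mathrm{all}}) = \sum_{k} \mathcal{C}(\G^{L}_{k})$ states precisely what the paper's informal discussion paragraph only sketches.
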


\paragraph{Discussion.}
The lower bound follows from the fact that the concatenated matrix must preserve at least the rank of its most expressive block.
The upper bound corresponds to the idealized case in which the column spaces of the matrices $\G^L_k$ are mutually independent.

\subsection{Layerwise Decomposition via the Chain Rule}
\label{sec:chain}

Each component $\G^L_k$ of the total gradient matrix admits a further decomposition, revealing how the parameter sensitivity of an internal layer is propagated to the final output:
\begin{equation}
\label{equ:chain}
    \G^L_k = \left( \prod_{i=L}^{k+1} \J_{i} \right) \G_{k}.
\end{equation}

\begin{proposition}
\label{prop:decompose}
The overall parameter gradient $\G^{L}_{\mathrm{all}}$ admits a structured layerwise decomposition:
\begin{equation}
\label{equ:G_all_decomp}
    \G^{L}_{\mathrm{all}} = \left[ \G_L,\; \J_L \G_{L-1},\; 
    \dots,\; \left( \prod_{i=L}^{2} \J_i \right) \G_1 \right].
\end{equation}
\end{proposition}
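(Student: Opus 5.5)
The plan is to combine the block structure of \cref{equ:G_all_def} with the chain-rule factorization \cref{equ:chain}, applied block by block. First I fix the setting. The network is a composition $\Z_i = \phi_i(\Z_{i-1};\theta_i)$ for $i=1,\dots,L$, with $\Z_0 = \X$ and $\Z_L = f(\X;\theta)$. The parameters $\theta_k$ of layer $k$ enter the computation only through the map $\phi_k$.

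The first step is to justify \cref{equ:chain} for a fixed $k$. Because $\theta_k$ affects $\Z_L$ only through $\Z_k$, the multivariate chain rule gives
\[
\G^L_k = \frac{\partial\,\mathrm{vec}(\Z_L)}{\partial\,\mathrm{vec}(\Z_k)}\,\frac{\partial\,\mathrm{vec}(\Z_k)}{\partial\,\mathrm{vec}(\theta_k)} = \J^L_k\,\G_k .
\]
Here I use that $\Z_{k+1},\dots,\Z_L$ depend on $\theta_k$ only via $\Z_k$. The layers above $k$ have their own parameters $\theta_{k+1},\dots,\theta_L$, which are held fixed, so there is no extra explicit dependence.

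The second step unrolls $\J^L_k$ by induction on $L-k$. The base case is $\J^k_k = \I$. For the inductive step, $\Z_{i}$ depends on $\Z_k$ only through $\Z_{i-1}$ whenever $i>k$, so
\[
\J^i_k = \J_i \J^{i-1}_k .
\]
Iterating this identity gives $\J^L_k = \J_L\J_{L-1}\cdots\J_{k+1} = \prod_{i=L}^{k+1}\J_i$. The product is ordered with the highest index on the left, and for $k=L$ it is the empty product $\I$.

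The final step substitutes these expressions into the column blocks of \cref{equ:G_all_def}. The block for $k=L$ is $\G_L$, the block for $k=L-1$ is $\J_L\G_{L-1}$, and so on down to $\left(\prod_{i=L}^{2}\J_i\right)\G_1$ for $k=1$. This is exactly \cref{equ:G_all_decomp}.

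The main obstacle is notational bookkeeping rather than mathematics. The dimensions must match under the $\mathrm{vec}$ convention: $\J_i$ is of size $|\mathrm{vec}(\Z_i)|\times|\mathrm{vec}(\Z_{i-1})|$ and $\G_k$ is of size $|\mathrm{vec}(\Z_k)|\times|\theta_k|$. The product must be non-commutative and ordered right-to-left from layer $k+1$ up to layer $L$. I would also state the assumption that the architecture is a strict feedforward chain with no skip connections and no parameters shared across layers. Under that assumption the chain rule has exactly one path from $\theta_k$ to $\Z_L$, and no sum over paths arises.
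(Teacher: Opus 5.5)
Your proposal is correct and follows essentially the same route as the paper: it obtains \cref{equ:G_all_decomp} by substituting the chain-rule factorization $\G^L_k = \bigl(\prod_{i=L}^{k+1}\J_i\bigr)\G_k$ of \cref{equ:chain} into each column block of \cref{equ:G_all_def}, and the same chain-rule step appears in the appendix proof of \cref{cor:G_L_k_form}. Your induction on $\J^L_k$ and your explicit strict-feedforward, no-shared-parameters assumption make precise what the paper leaves implicit.
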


\begin{notebox}
    \paragraph{Structural Insight.} \cref{equ:G_all_decomp} provides a transparent view of the NTK's anatomy. It reveals that the optimization potential of any layer $k$ is not merely a function of its local parameters, but is \textit{filtered} by the chain of Feature Jacobians $\J_i$ from all subsequent layers. This formulation suggests that if any component in the chain possesses a restricted rank, it will act as a structural bottleneck, effectively strangling the gradient flow for all preceding parameters.
\end{notebox}

\section{The Inlet Rank Collapse in INRs}
\label{sec:bottleneck_mlp}

In this section, we apply the diagnostic framework developed in \cref{sec:jac/first} to coordinate-based MLPs. We demonstrate that the fundamental optimization bottleneck is not caused by the parameters themselves, but by the low-rank intermediate representations that starve the weight gradients at the network's inlet.

\subsection{Preliminaries for MLP Networks}
\label{sec:pre_mlp/first}

We consider a coordinate-based MLP composed of an input embedding $\phi$, a stack of $L$ hidden layers, and a linear output head. Let $\X \in \mathbb{R}^{N \times P}$ be the input coordinates. The network transforms $\X$ into a sequence of intermediate representations $\Z_i \in \mathbb{R}^{N \times D}$:
\begin{equation}
    \begin{aligned}
        \Z_{0} &= \phi(\X), \\
        \Z_{i} &= \rho(\Z_{i-1} \W_{i} + \mathbf{b}_{i}), \quad i = 1, \dots, L, \\
        \Y     &= \Z_{L} \W_{\mathrm{out}} + \mathbf{b}_{\mathrm{out}},
    \end{aligned}
\end{equation}
where $\W_i \in \mathbb{R}^{D \times D}$ and $\W_{\mathrm{out}} \:{\in}\: \mathbb{R}^{D \times M}$ are learnable weights and $\rho$ is a non-linear activation (e.g., ReLU or $\sin$). The embedding $\phi: \mathbb{R}^{P} \to \mathbb{R}^{D}$ maps low-dimensional coordinates to the hidden space via linear projection, sinusoidal activation (e.g., SIREN)~\cite{sitzmann2020implicit}, batch normalization~\cite{cai2024batch}, or positional encoding such as Random Fourier Features~\cite{tancik2020fourier}.

\subsection{Jacobians and Gradients in MLPs}
\label{sec:jac_emb/first}

To pinpoint the structural bottleneck, we first derive the closed-form expressions for Jacobians and gradients in a linear MLP. 

\begin{lemma}
\label{lem:vec_linear}
For a linear layer $\Z_{i} = \Z_{i-1} \W_{i}$, the Feature Jacobian $\J_{i}$ and the Weight Jacobian $\G_{i}$ exhibit the following Kronecker product structures:
\begin{equation}
\label{equ:d_layer}
    \J_{i} = \W_{i}^{\top} \otimes \I_{N}, \quad \G_{i} = \I_{D} \otimes \Z_{i-1}.
\end{equation}
\end{lemma}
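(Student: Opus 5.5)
The plan is to derive both identities from the standard vectorization rule $\mathrm{vec}(\mathbf{A}\mathbf{X}\mathbf{B}) = (\mathbf{B}^\top \otimes \mathbf{A})\,\mathrm{vec}(\mathbf{X})$, where $\mathrm{vec}$ stacks columns. The convention matters: the Kronecker orderings in \cref{equ:d_layer} are exactly those produced by column-major vectorization, so I fix that convention at the outset. Since the layer map $\Z_{i-1} \mapsto \Z_{i-1}\W_i$ is linear in each argument separately, each Jacobian equals the matrix of the corresponding linear map in vectorized coordinates. No limiting argument is needed.

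\textbf{Feature Jacobian.} First write $\Z_i = \I_N \Z_{i-1} \W_i$. Apply the rule with $\mathbf{A} = \I_N$, $\mathbf{X} = \Z_{i-1}$, and $\mathbf{B} = \W_i$. This gives $\mathrm{vec}(\Z_i) = (\W_i^\top \otimes \I_N)\,\mathrm{vec}(\Z_{i-1})$. Differentiating this linear relation yields $\J_i = \W_i^\top \otimes \I_N$, which has size $ND \times ND$.

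\textbf{Weight Jacobian.} Next write $\Z_i = \Z_{i-1} \W_i \I_D$. Apply the rule with $\mathbf{A} = \Z_{i-1}$, $\mathbf{X} = \W_i$, and $\mathbf{B} = \I_D$. This gives $\mathrm{vec}(\Z_i) = (\I_D \otimes \Z_{i-1})\,\mathrm{vec}(\W_i)$, so $\G_i = \I_D \otimes \Z_{i-1}$, which has size $ND \times D^2$. Here $\theta_i$ is identified with $\W_i$, since the statement considers the bias-free linear layer.

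The only real obstacle is justifying the vec identity itself. I would check it entrywise. The $k$-th column of $\mathbf{A}\mathbf{X}\mathbf{B}$ is $\sum_j B_{jk}\,\mathbf{A}\mathbf{x}_j$, where $\mathbf{x}_j$ denotes the $j$-th column of $\mathbf{X}$. This sum is precisely the $k$-th block row of $(\mathbf{B}^\top \otimes \mathbf{A})\,\mathrm{vec}(\mathbf{X})$. Finally, I would remark on the row-major convention: under it the factors swap, giving $\I_N \otimes \W_i^\top$ and $\Z_{i-1} \otimes \I_D$. This swap does not affect the ranks used later, since $\mathrm{rank}(\mathbf{A} \otimes \mathbf{B}) = \mathrm{rank}(\mathbf{A})\,\mathrm{rank}(\mathbf{B})$.
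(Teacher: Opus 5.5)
Your proposal is correct and takes the same approach as the paper's proof. Both apply $\mathrm{vec}(\mathbf{A}\mathbf{X}\mathbf{B}) = (\mathbf{B}^{\top} \otimes \mathbf{A})\,\mathrm{vec}(\mathbf{X})$ with the identical choices $(\I_N, \Z_{i-1}, \W_i)$ and $(\Z_{i-1}, \W_i, \I_D)$, and your entrywise check of the identity and your column-major versus row-major remark are valid details that the paper leaves implicit.
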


By combining \cref{equ:d_layer} with the decomposition in \cref{equ:chain}, we obtain the final form of the gradient for each layer's parameters, which reveals a critical coupling:

\begin{corollary}
\label{cor:G_L_k_form}
For a linear MLP, the gradient of the output with respect to the $k$-th layer parameters is:
\begin{equation}
\label{equ:G_final_form}
    \G^{L}_{k} = \underbrace{\left( \I_{D} \prod_{i=L}^{k+1} \W_{i} \right)^{\top}}_{\text{Weight Product Factor}} \otimes \underbrace{\Z_{k-1}}_{\text{Representation Factor}}.
\end{equation}
\end{corollary}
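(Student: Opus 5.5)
The plan is to substitute the Kronecker forms of \cref{lem:vec_linear} into the chain-rule decomposition \cref{equ:chain}, then collapse the product with the mixed-product property
\begin{equation*}
(\mathbf{A}\otimes\mathbf{B})(\mathbf{C}\otimes\mathbf{D})=(\mathbf{A}\mathbf{C})\otimes(\mathbf{B}\mathbf{D}).
\end{equation*}

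\textbf{Step 1: collapse the chain of Feature Jacobians.} By \cref{lem:vec_linear}, each layer in the chain contributes $\J_i = \W_i^\top\otimes \I_N$. Applying the mixed-product property repeatedly, I will show by induction on the number of factors that
\begin{equation*}
\prod_{i=L}^{k+1}\J_i = \left(\W_L^\top \W_{L-1}^\top\cdots \W_{k+1}^\top\right)\otimes \I_N .
\end{equation*}
The reverse-order identity for transposes then rewrites the left factor as $\left(\W_{k+1}\W_{k+2}\cdots\W_L\right)^\top$.

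\textbf{Step 2: attach the Weight Jacobian.} Multiplying on the right by $\G_k=\I_D\otimes \Z_{k-1}$ and using the mixed-product property once more gives
\begin{equation*}
\G^L_k = \left(\W_{k+1}\cdots\W_L\right)^\top \otimes \Z_{k-1}.
\end{equation*}
Inserting the harmless identity $\I_D$ reproduces \cref{equ:G_final_form}. The edge case $k=L$ gives an empty product, so the expression reduces to $\I_D\otimes\Z_{L-1}=\G_L$. This is exactly why the identity factor is written explicitly in the statement.

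\textbf{Main obstacle: the product notation.} The calculation itself is routine; the real difficulty is bookkeeping. The notation $\prod_{i=L}^{k+1}\W_i$ must be read as the ordered matrix product $\W_{k+1}\cdots\W_L$, since only then does its transpose equal $\W_L^\top\cdots\W_{k+1}^\top$, matching the order of the Jacobian chain $\J_L\cdots\J_{k+1}$. I will fix this convention explicitly before starting.

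\textbf{Supporting check: vectorization convention.} I will also check that column-major $\mathrm{vec}$ makes \cref{lem:vec_linear} consistent, via the identity
\begin{equation*}
\mathrm{vec}(\mathbf{A}\mathbf{X}\mathbf{B})=(\mathbf{B}^\top\otimes\mathbf{A})\,\mathrm{vec}(\mathbf{X}),
\end{equation*}
so that the Kronecker factors compose in the order used above.
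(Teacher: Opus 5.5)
Your proposal is correct and follows essentially the same route as the paper: substitute $\J_i=\W_i^\top\otimes\I_N$ and $\G_k=\I_D\otimes\Z_{k-1}$ into the chain rule and collapse the product with the mixed-product property. Your explicit convention that $\prod_{i=L}^{k+1}\W_i$ denotes $\W_{k+1}\cdots\W_L$, so that its transpose $\W_L^\top\cdots\W_{k+1}^\top$ matches the order of the Jacobian chain, makes precise the ordering step that the paper only asserts as ``$(\prod \mathbf{M}_i)^\top=\prod \mathbf{M}_i^\top$ in the context of these Jacobians.''
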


\paragraph{Extension to Non-linearities.} While activations (e.g., ReLU) or Batch Normalization introduce diagonal modulation matrices, they do not alter the fundamental Kronecker coupling between downstream weight products and upstream representations. We provide the generalized formulations in \textbf{Appendix~\cref{appendix:additional_jac}}. For analytical clarity, we focus on \cref{equ:G_final_form} to localize the primary bottleneck.

\subsection{Diagnosis: Healthy Weights vs. Starved Gradients}

\begin{figure}[t]
\centering
\begin{subfigure}{0.48\linewidth}
    \centering
    \includegraphics[width=\linewidth]{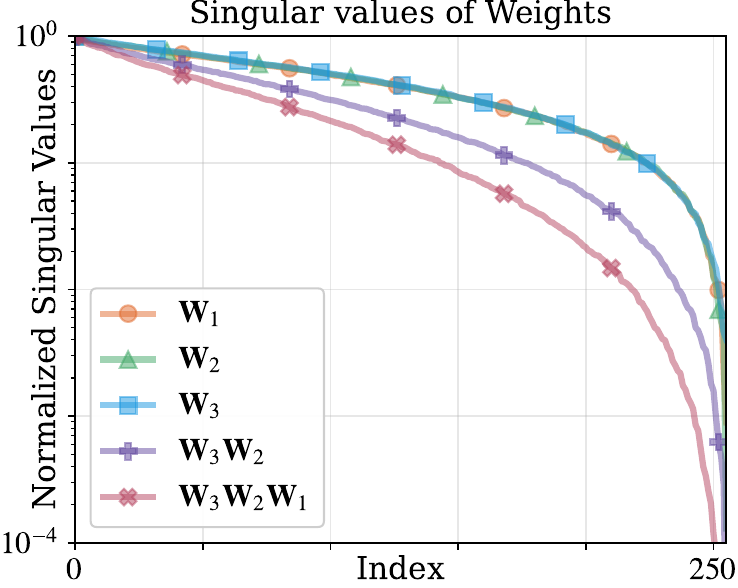}
    \caption{Weight Spectra}
    \label{fig:svd_weights}
\end{subfigure}
\hfill
\begin{subfigure}{0.48\linewidth}
    \centering
    \includegraphics[width=\linewidth]{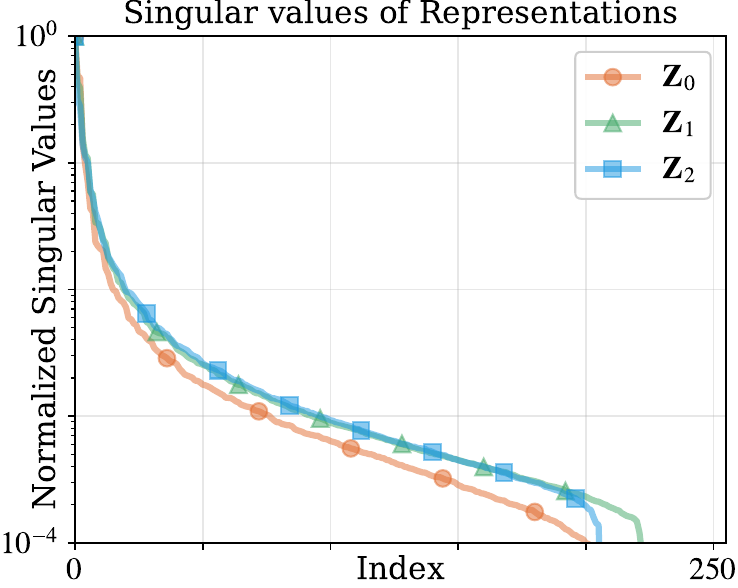}
    \caption{Representation Spectra}
    \label{fig:svd_representations}
\end{subfigure}
\caption{The contrast between (a) weight matrices $\W_i$ and (b) intermediate representations $\Z_i$ in a 3-layer vanilla MLP (width 256). While weights remain full-rank and well-conditioned, the representations suffer from severe \textbf{Inlet Rank Collapse}, effectively starving the parameter gradients.}
\label{fig:svd_comparison}
\end{figure}

We now investigate which factor in \cref{equ:G_final_form} constitutes the dominant bottleneck. Using standard random initialization, we observe a stark contrast between the two factors:

\begin{remark}[Healthy Weights: The Capacity]
\label{remark:random_weights}
As illustrated in \cref{fig:svd_comparison} (a), the \textbf{Weight Product Factor} ($\I_{D} \prod \W_{i}$) is typically full-rank ($D$) and well-conditioned. This suggests that the "optimization machinery" possesses ample degrees of freedom to capture complex signals.
\end{remark}

\begin{remark}[Inlet Rank Collapse: The Starvation]
\label{remark:representations}
In contrast, the \textbf{Representation Factor} ($\Z_{k-1}$) exhibits severe rank deficiency (\cref{fig:svd_comparison} (b)). Because the input $\X$ resides in a low-dimensional coordinate space ($P \in \{2, 3\}$), the initial representation $\Z_0 = \phi(\X)$ is inherently low-rank. This \textbf{``Inlet Rank Collapse''} persists through early layers, failing to expand the signal's dimensionality into the hidden space.
\end{remark}

Combining the layerwise decomposition from \cref{prop:decompose} with the Kronecker property $\operatorname{rank}(A \otimes B) = \operatorname{rank}(A)\operatorname{rank}(B)$, we arrive at our core diagnostic conclusion:

\begin{proposition}
\label{prop:bottleneck_mlp}
In coordinate-based MLPs, the rank of the overall parameter gradient $\G^L_{\mathrm{all}}$, and thus the NTK, is primarily bounded by the ranks of intermediate representations:
\begin{equation}
    \max_k \operatorname{rank}(\Z_{k-1}) \leq \operatorname{rank}(\G^L_{\mathrm{all}}) \leq D \sum_{k=1}^L \operatorname{rank}(\Z_{k-1}).
\end{equation}
\end{proposition}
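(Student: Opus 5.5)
The plan is to combine \cref{prop:concatenation/first} with the Kronecker form of \cref{cor:G_L_k_form}. Everything takes place in the linear-MLP setting of that corollary. The nonlinear case is handled the same way, by inserting the diagonal modulation matrices from the appendix into the weight product factor; they do not touch the representation factor.

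For the upper bound, \cref{prop:concatenation/first} gives $\operatorname{rank}(\G^L_{\mathrm{all}}) \le \sum_k \operatorname{rank}(\G^L_k)$. By \cref{equ:G_final_form}, each block factors as $\G^L_k = A_k^\top \otimes \Z_{k-1}$ with $A_k = \I_D\prod_{i=L}^{k+1}\W_i \in \mathbb{R}^{D\times D}$. Using $\operatorname{rank}(A\otimes B)=\operatorname{rank}(A)\operatorname{rank}(B)$ we get $\operatorname{rank}(\G^L_k)=\operatorname{rank}(A_k)\operatorname{rank}(\Z_{k-1})$. The weight factor satisfies $\operatorname{rank}(A_k)\le D$ for any weights, since $A_k$ is $D\times D$. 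Summing over $k$ then gives $D\sum_k \operatorname{rank}(\Z_{k-1})$.

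For the lower bound, \cref{prop:concatenation/first} gives $\operatorname{rank}(\G^L_{\mathrm{all}})\ge \max_k \operatorname{rank}(\G^L_k) = \max_k \operatorname{rank}(A_k)\operatorname{rank}(\Z_{k-1})$. This is at least $\max_k \operatorname{rank}(\Z_{k-1})$ provided every $A_k$ is nonzero, i.e.\ $\operatorname{rank}(A_k)\ge 1$.

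That nondegeneracy is the main obstacle. A product of weight matrices can vanish, and then the corresponding block contributes nothing. I would state it explicitly as an assumption. It is justified by \cref{remark:random_weights}: under standard random initialization each $\W_i$ is almost surely full rank, so $\operatorname{rank}(A_k)=D\ge 1$ almost surely. That same remark shows the factor $D$ in the upper bound is essentially attained. For the last layer the product is empty and $A_L=\I_D$, so there the condition holds trivially. In the nonlinear case the analogous condition is that the activation derivative masks are not identically zero. For ReLU this means at least one unit is active on the data. I would note this as the hypothesis under which the bound holds, and observe that it holds generically at initialization.

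Finally, since $\K=(\G^L_{\mathrm{all}})^\top\G^L_{\mathrm{all}}$ has the same rank as $\G^L_{\mathrm{all}}$ (or with $\G\G^\top$, depending on the orientation convention), the same bounds transfer to the NTK.
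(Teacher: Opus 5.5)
Your linear-case argument is correct and is essentially the paper's own: the paper gets the proposition in one line from the concatenation bounds of \cref{prop:concatenation/first}, the Kronecker form of \cref{cor:G_L_k_form}, and $\operatorname{rank}(A\otimes B)=\operatorname{rank}(A)\operatorname{rank}(B)$. Your explicit nondegeneracy hypothesis on the weight-product factor states what the paper leaves implicit in the lower bound.

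The step that fails as written is your extension to nonlinear networks, which the paper also only asserts informally. The masks $\mathbf{D}^A_i$ are $ND\times ND$ diagonal matrices whose entries vary with both sample and unit. They are therefore not of the form $(\cdot)\otimes\I_N$, and they cannot be absorbed into a $D\times D$ weight factor. Once they are inserted, $\G^L_k$ is no longer a Kronecker product, so the rank identity is unavailable.

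The upper bound still holds, for a different reason. We have $\G^L_k=\mathbf{M}_k(\I_D\otimes\Z_{k-1})$ for some matrix $\mathbf{M}_k$, so $\operatorname{rank}(\G^L_k)\le D\operatorname{rank}(\Z_{k-1})$.

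The lower bound, however, does not follow from your hypothesis that at least one unit is active. Every block of $\G^L_{\mathrm{all}}$ is left-multiplied by $\mathbf{D}^A_L$, so $\operatorname{rank}(\G^L_{\mathrm{all}})\le\operatorname{rank}(\mathbf{D}^A_L)$. That rank is $1$ when a single (sample, unit) pair is active at the last layer, while $\operatorname{rank}(\Z_0)$ can be $2$ or more.

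A clean repair is to lower-bound through the last block alone. For ReLU, $\G^L_L=\mathbf{D}^A_L(\I_D\otimes\Z_{L-1})$ is block-diagonal with blocks $\mathrm{diag}(\mathbf{m}_d)\Z_{L-1}$, $d=1,\dots,D$, where $\mathbf{m}_d$ is the activity mask of unit $d$. Suppose every last-layer unit is active on some sample at which $\Z_{L-1}$ has a nonzero row. Then each block is nonzero and $\operatorname{rank}(\G^L_{\mathrm{all}})\ge\operatorname{rank}(\G^L_L)\ge D\ge\max_k\operatorname{rank}(\Z_{k-1})$, since every $\Z_{k-1}$ has $D$ columns. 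In the linear case the same observation, using $A_L=\I_D$, gives the lower bound under the single condition $\Z_{L-1}\neq 0$, with no hypothesis on the weights.
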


\begin{notebox}
    \paragraph{Structural Diagnosis.} Our analysis reveals that weights in coordinate MLPs are \textbf{``starved''}. Despite the high-dimensional parameter space, the available gradient directions are trapped within the low-rank manifold defined by $\Z_{k-1}$. This explains why increasing depth or width often yields marginal gains: the "inlet" is too narrow to accommodate high-bandwidth information, causing the entire NTK to inherit a rank-deficient structure that stifles.
\end{notebox}



\begin{figure*}[ht]
    \centering
    \includegraphics[width=1.0\linewidth]{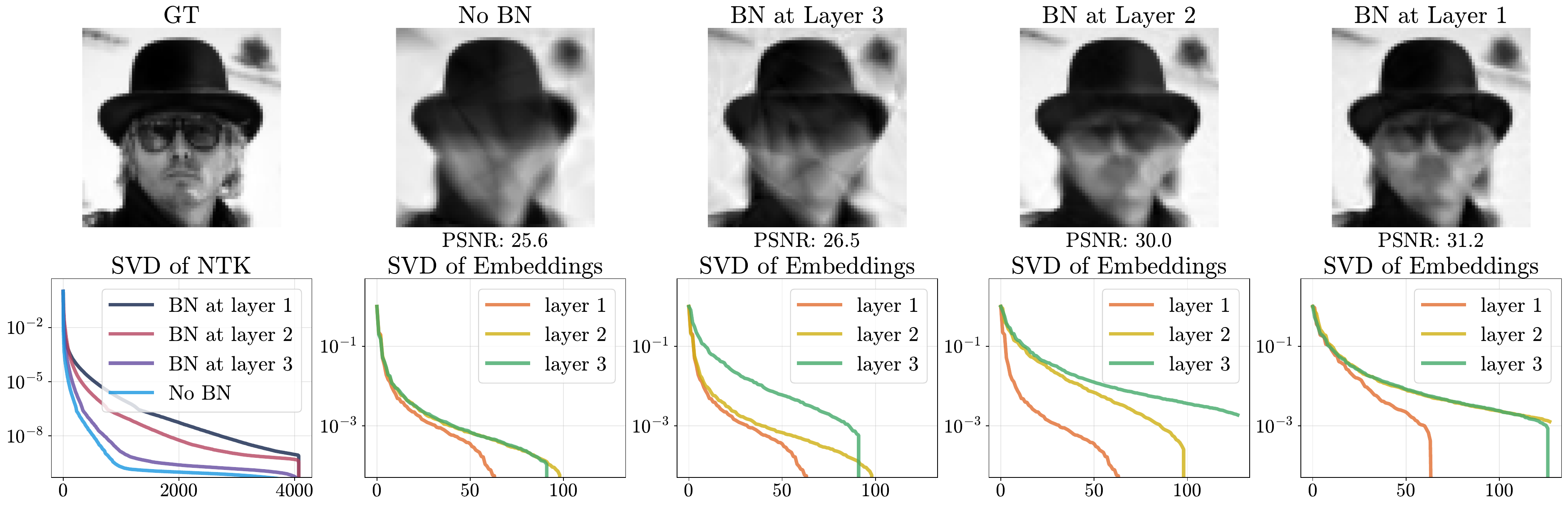}
    \caption{Reconstruction quality vs. BN position. The experiment confirms that addressing rank deficiency at the \textbf{inlet} (Layer 1) is the only way to fully recover the network's effective capacity. Interventions in deeper layers cannot recover the gradient diversity of preceding stages, as those earlier weights remain trapped in the low-rank subspace imposed by the inlet.}
    \label{fig:layer_bn/first}
\end{figure*}
\begin{figure}[t]
\centering
\includegraphics[width=0.9\linewidth]{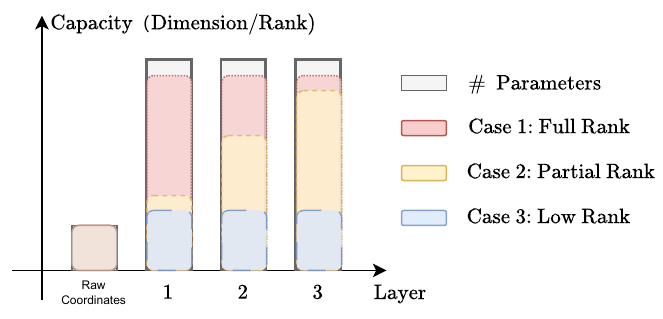}
\caption{Conceptual illustration of capacity. While parameter dimension (gray) is fixed, effective capacity is governed by representation rank (colored). \textbf{Case 3}: Persistent low rank at the inlet starves the entire network. \textbf{Case 2}: Mid-stage expansion only recovers capacity for deeper layers. \textbf{Case 1}: Early rank expansion at the inlet unlocks the capacity of all parameters.}
\label{fig:capacity/first}
\end{figure}

\section{A Unified Perspective on Existing Remedies}
\label{sec:unified_perspective}

Building upon the diagnostic of inlet rank collapse, we now provide a capacity-level interpretation. While classical model capacity is often equated with parameter count, our analysis reveals that the \textbf{effective capacity} of coordinate MLPs is fundamentally capped by the numerical rank of intermediate representations. This perspective allows us to localize the optimization bottleneck to the very first layer and provides a unified explanation for how diverse empirical techniques (PE, SIREN, BN) all fundamentally serve to mitigate this same structural deficiency.

\subsection{Effective Capacity vs. Parameter Count}
\label{sec:thought_experiment}

To illustrate the decoupling of parameter count and learning power, we consider some scenarios with identical architectures but differing representation rank dynamics (\cref{fig:capacity/first}):

\paragraph{The Starvation Phenomenon (Case 3):} In vanilla coordinate MLPs, the rank is locked to the low-dimensional input at the inlet. Despite having high-dimensional weight matrices (gray bars), the gradient manifold is so restricted that the vast majority of parameters remain \textbf{functionally dormant}. This reveals that without sufficient representation rank, raw parameter count is a misleading metric for model capacity.
\paragraph{Unlocking Potential (Case 1):} Conversely, when the representation achieves near-full rank at the inlet, the high-dimensional weight matrices at every layer are "fed" with a sufficiently rich basis. This ensures that the model's effective capacity (colored bars) matches its total parameterization, unlocking the full expressive power of the architecture.

\subsection{Experimental Validation: The Primacy of the Inlet}
\label{sec:exp_evidence}

To demonstrate that the network's effective capacity is gated at the inlet, we compare four settings of a 3-layer MLP where a Batch Normalization (BN) layer is inserted at different stages to serve as a local rank-restoration module (\cref{fig:layer_bn/first}). This setup allows us to observe how the stage of rank restoration dictates the utilization of the entire network's parametric capacity. The results confirm that the "starvation" of weights must be addressed at the earliest possible stage to unlock the model's full potential.

\paragraph{Inlet Restoration (BN at Layer 1):} Restoring the rank immediately after the initial dimension expansion feeds all subsequent weight Jacobians with high-rank signals. This single-point intervention at the inlet is sufficient to unlock the latent capacity of the entire architecture, as it ensures that the high-dimensional weight matrices throughout the network have access to a diverse basis for optimization.
    
\paragraph{Deep Intervention (BN at Layer 3):} While the rank is increased locally for the final layer, the preceding weights remain starved. The improvement is marginal because a late-stage intervention cannot rescue the gradient directions already lost at the inlet, leaving the majority of the network's parameters restricted and under-utilized.

\subsection{Unified View: Rank Restoration at the Inlet}

This capacity-centric perspective provides a unified explanation for the success of various INR techniques, all of which function as structural remedies for the inlet bottleneck.

\paragraph{Positional Encoding (PE):} An \textit{explicit inlet expansion} that inflates $\operatorname{rank}(\Z_0)$ via Fourier mapping before the signal enters the first trainable layer. By manually lifting the input into a high-dimensional space, PE bypasses the inlet rank collapse entirely, ensuring that all subsequent weights are "fed" with high-rank representations from the outset.

\paragraph{SIREN and Batch Normalization:} While these methods are often applied globally, their primary effectiveness stems from their ability to increase the representation rank immediately after the initial dimension expansion. Whether through the high-frequency nature of sinusoidal activations or the spectral spreading of normalization, these techniques act as \textit{implicit rank-restorers} at the inlet. Once the rank is established at the start, the remaining layers, which behave similarly to standard ReLU layers, simply preserve this high-rank flow.

\begin{notebox} 
\paragraph{Core Insight: The Inlet Dominance Effect}
The effective capacity of an INR is determined by its representation rank rather than raw parameter count. The earlier the rank is expanded, the more parameters are unlocked throughout the network. Most existing techniques succeed precisely because they intervene at the first layer to resolve the \textbf{Inlet Rank Collapse}.
\end{notebox}

\section{Rank-Expanding Initialization}
\label{sec:init/first}

Based on our preceding analysis, the input embedding layer is the primary bottleneck in ReLU-based coordinate MLPs. Under standard random initialization, a ReLU layer fails to expand the numerical rank of representations as the embedding dimension grows. Consequently, the rank deficiency introduced at the inlet persists throughout the network, locking the model into a low-capacity regime.

While most existing remedies introduce new architectural components, such as Fourier features or periodic activations, our theory suggests a more fundamental solution. If representation rank alone governs effective capacity, it should be possible to "unlock" the network without altering its architecture:

\begin{center}
\emph{Can we restore the rank of the input embedding purely through initialization?}
\end{center}

We show that the answer is affirmative. By geometrically designing the initial weights $\W^{*}$ and biases $\mathbf{b}^{*}$, we can force the input ReLU layer to generate a high-rank representation. This approach isolates rank enhancement from other confounding architectural factors, providing a "clean" validation of the Inlet Dominance Effect.

\subsection{1-D Case: Constructing Triangular Representations}
\label{sec:1d_init/first}

In the 1-D setting, the number of input coordinates $N$ is typically manageable, allowing the embedding dimension $D$ to scale accordingly. To achieve full rank, we leverage a construction adapted from \cite{zhang2017understanding} that ensures a full-rank ReLU embedding.

For sorted 1-D inputs $\x = [x_{1}, \dots, x_{N}]$, we set $\W^{*} = \mathbf{1}$ and $\mathbf{b}^{*} = [-x_1 + \epsilon, \dots, -x_N + \epsilon]^\top$, where $\epsilon > 0$ is a small constant. As shown in \cref{fig:init_1d/first}, this configuration ensures that each neuron $j$ is active only for $x \geq x_j - \epsilon$. This yields a \textbf{lower-triangular activation pattern} in $\Z_0$, which is guaranteed to be full-rank, effectively bypassing the starvation problem from the very first layer.

\begin{figure}[ht]
    \centering
    \includegraphics[width=0.8\linewidth]{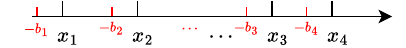}
    \caption[Illustration of initialization in the 1-D case.]
    {Illustration of the proposed initialization in the 1-D case.
    Carefully chosen biases separate activation thresholds across input
    coordinates, yielding a full-rank, triangular representation matrix.}
    \label{fig:init_1d/first}
\end{figure}

\begin{figure}[ht]
    \centering
    \includegraphics[width=0.6\linewidth]{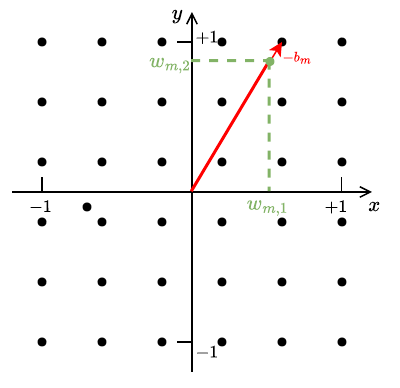}
    \caption[Illustration of initialization in the 2-D case.]
    {Illustration of the proposed initialization in the 2-D case.
    Each grid point $\V$ defines a normalized weight vector and a bias equal to the
    negative norm plus $\epsilon$.
    This geometry spreads ReLU thresholds across space, facilitating
    a high-rank input embedding.}
    \label{fig:init_2d/first}
\end{figure}

\begin{figure*}[t]
    \centering
    \subfloat[\centering SVD of embeddings]
    {\includegraphics[width=0.45\linewidth]{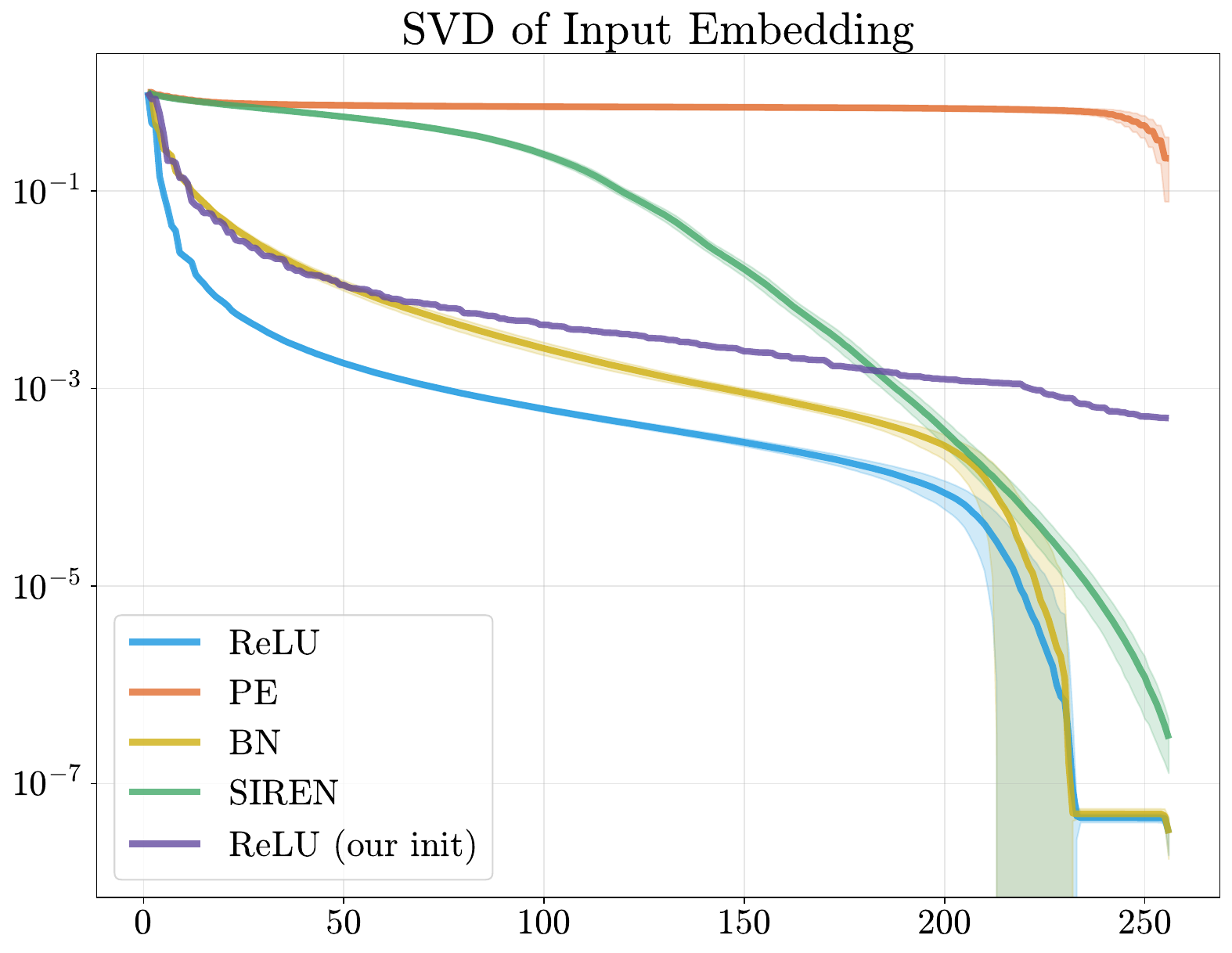}}%
    \quad
     \subfloat[\centering SVD of NTK]
     {\includegraphics[width=0.45\linewidth]{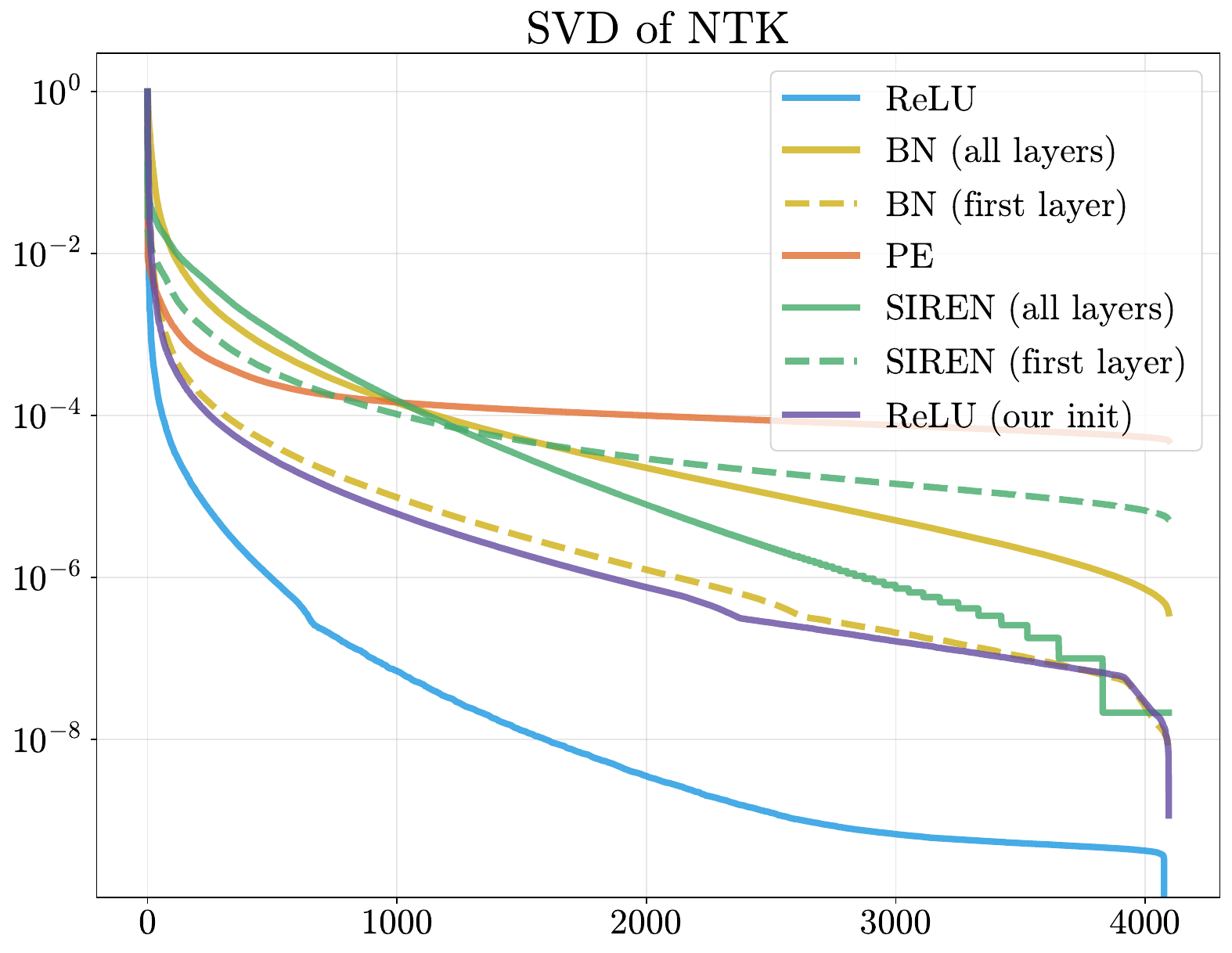}}%
    \caption[Comparison of SVD of input embeddings and NTK kernel with various methods.]{(A) Singular value spectra of input embeddings under different methods.
    Our initialization substantially increases the numerical rank of the input embedding.
    Shaded regions indicate the standard deviation over $100$ runs.
    (B) Singular value spectra of the corresponding NTK kernels.
    Improving the rank of the first-layer representations leads to a significantly
    better-conditioned NTK spectrum.
    Dashed curves correspond to global methods applied only at the first layer,
    with all subsequent layers remaining standard ReLU layers.}%
    \label{fig:svd_embedding_ntk/first}%
\end{figure*}

\subsection{2-D Case: Geometric Threshold Spreading}
\label{sec:2d_init/first}

In 2-D, the number of coordinates grows quadratically, making it impractical to assign a unique neuron to every pixel. Instead, we propose a geometric strategy to distribute ReLU activation boundaries across the input domain.

\begin{proposition}
\label{prop:2d_init/first}
Let a ReLU linear layer take normalized two-dimensional coordinates
$\X \:{\in}\: [-1,1]^2$, and let the embedding dimension be $D \:{=}\: p^2$.
Define a set of grid points
\begin{equation}
    \V_{p(i-1)+j}
    =
    \left[
    \frac{2i}{p+1} - 1,\;
    \frac{2j}{p+1} - 1
    \right],
\end{equation}
for $i,j \:{=}\: 1,\ldots,p$.
For each grid point, define the weight and bias as
\begin{equation}
\begin{aligned}
    \W^{*}_{p(i-1)+j} &= \frac{\V_{p(i-1)+j}}{\|\V_{p(i-1)+j}\|}, \\
    \mathbf{b}^{*}_{p(i-1)+j} &= -\|\V_{p(i-1)+j}\| + \epsilon,
\end{aligned}
\end{equation}
where $\epsilon > 0$ is a small constant.
\end{proposition}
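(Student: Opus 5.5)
The statement as printed defines $\W^{*}$ and $\mathbf{b}^{*}$ but gives no explicit conclusion. I will prove the claim it is meant to support, the 2-D analogue of the triangular argument of \cref{sec:1d_init/first}. Write $\Z_0=\rho(\X\W^{*}+\mathbf{b}^{*})$ with $\rho$ the ReLU. The claim is that if the input set $\X$ contains the $p^2$ grid points $\V_k$ (or points sufficiently close to them), then $\operatorname{rank}(\Z_0)=D=p^2$ for all sufficiently small $\epsilon>0$. The rank of the embedding therefore scales with the width, which by \cref{prop:bottleneck_mlp} lifts the bound on $\operatorname{rank}(\G^L_{\mathrm{all}})$.

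\emph{Geometry of one neuron.} Write $\mathbf{u}_k=\V_k/\|\V_k\|$. Neuron $k$ takes the value $\max(0,\langle \x,\mathbf{u}_k\rangle-\|\V_k\|+\epsilon)$. It is therefore active exactly on the half-plane beyond the line through $\V_k$ orthogonal to $\V_k$, shifted toward the origin by $\epsilon$.

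\emph{The $p^2\times p^2$ submatrix.} Let $\mathbf{A}$ be the submatrix of $\Z_0$ with rows evaluated at $\x=\V_m$, so $\mathbf{A}_{mk}=\max(0,\langle\V_m,\mathbf{u}_k\rangle-\|\V_k\|+\epsilon)$. Order rows and columns by increasing $\|\V_k\|$; points of equal norm lie on a common circle and are grouped together.
\begin{itemize}
\item On the diagonal, $\mathbf{A}_{kk}=\epsilon$.
\item If $\|\V_m\|<\|\V_k\|$, Cauchy--Schwarz gives $\langle\V_m,\mathbf{u}_k\rangle\le\|\V_m\|<\|\V_k\|$.
\item If $\|\V_m\|=\|\V_k\|$ with $m\neq k$, then $\V_m$ is not a positive multiple of $\mathbf{u}_k$, so the inequality $\langle\V_m,\mathbf{u}_k\rangle<\|\V_k\|$ is strict.
\end{itemize}
Set
\[
\delta=\min\bigl\{\|\V_k\|-\langle\V_m,\mathbf{u}_k\rangle : m\neq k,\ \|\V_m\|\le\|\V_k\|\bigr\}.
\]
This is a minimum over finitely many strictly positive numbers, so $\delta>0$. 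Choosing $\epsilon<\delta$ makes $\mathbf{A}_{mk}=0$ whenever $m\neq k$ and $\|\V_m\|\le\|\V_k\|$. Hence $\mathbf{A}$ is lower block-triangular with diagonal blocks $\epsilon\I$. Its determinant is $\epsilon^{p^2}\neq0$, so $\operatorname{rank}(\Z_0)\ge\operatorname{rank}(\mathbf{A})=p^2$. Since $\Z_0$ has exactly $D=p^2$ columns, the rank equals $D$.

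\emph{Degenerate center point.} When $p$ is odd, the center point $\V_k=\mathbf{0}$ makes $\mathbf{u}_k$ undefined. I would fix any unit vector for $\mathbf{u}_k$, giving bias $\epsilon$. This column is ordered first because it has the smallest norm, so no rows precede it and no zero constraint is needed above its diagonal entry. Its diagonal entry $\max(0,\langle\mathbf{0},\mathbf{u}_k\rangle+\epsilon)=\epsilon$ is unchanged, so the triangular argument still goes through.

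\emph{From grid points to actual samples.} In practice $\X$ is a pixel grid that need not contain the $\V_k$ exactly. I expect this to be the main obstacle. The plan is to pick, for each $k$, the sample $\x_{m(k)}$ nearest to $\V_k$. Two facts should then carry over:
\begin{itemize}
\item The zero pattern is preserved once $\|\x_{m(k)}-\V_k\|<(\delta-\epsilon)$, since each pre-activation then stays strictly negative.
\item The diagonal entries stay positive once $\|\x_{m(k)}-\V_k\|<\epsilon$.
\end{itemize}
Both hold as soon as the pixel spacing is below $\min(\epsilon,\delta-\epsilon)$. Taking $\epsilon=\delta/2$ makes this a spacing requirement of order $\delta/2$. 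Bounding $\delta$ from below in terms of $p$, for instance of order $1/p^2$ from the angular and radial separation of lattice points, would make this requirement explicit.
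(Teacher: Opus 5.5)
Your argument is correct, and it goes further than the paper does. The paper gives no proof for this item: as printed, it is only a construction of $\W^{*},\mathbf{b}^{*}$. Its benefit is asserted informally (the thresholds are spread across the domain, which ``facilitates a high-rank embedding''), by analogy with the 1-D lower-triangular construction. It is supported only empirically, by the singular-value spectra of the embeddings in the experiments.

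Your proof is the rigorous form of that analogy. Ordering neurons by $\|\V_k\|$ plays the role of sorting the 1-D inputs. The strict Cauchy--Schwarz inequality for distinct grid points of equal norm is the genuinely 2-D ingredient that keeps the pattern triangular.

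What your route buys:
\begin{itemize}
\item An exact conclusion: $\operatorname{rank}(\Z_0)=p^2=D$ whenever every $\V_k$ has a sample strictly within $\min(\epsilon,\delta-\epsilon)$ of it.
\item A repair of the odd-$p$ center point, where the paper's $\V/\|\V\|$ is undefined.
\end{itemize}

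The sampling step you flag as the main obstacle may not even arise in the paper's setting. Suppose $p=16$ and the pixel coordinates are $-1+2t/255$ for $t=0,\dots,255$. Since $17\mid 255$, the grid point $\V_{p(i-1)+j}$ is then exactly the sample at pixel $(15i,15j)$, and your exact argument applies with no perturbation step.

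What your route does not buy is the well-spread spectrum the paper actually plots. Your certificate gives only $\sigma_{\min}(\Z_0)\ge\sigma_{\min}(\mathbf{A})$, and $\sigma_{\min}(\mathbf{A})\le|\det\mathbf{A}|^{1/p^2}=\epsilon$. It therefore establishes exact rank, not good conditioning; the paper addresses conditioning only through its experiments.
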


By spreading these boundaries across the domain (\cref{fig:init_2d/first}), we ensure diverse activation patterns. Unlike random initialization, which often leaves large regions of the coordinate space "dead" or redundant, this geometric spreading facilitates a high-rank embedding that captures spatial variations more effectively.

\begin{figure}[t]
    \centering
    \subfloat[\centering ReLU (default):\\90.98]
    {\includegraphics[width=0.24\linewidth]{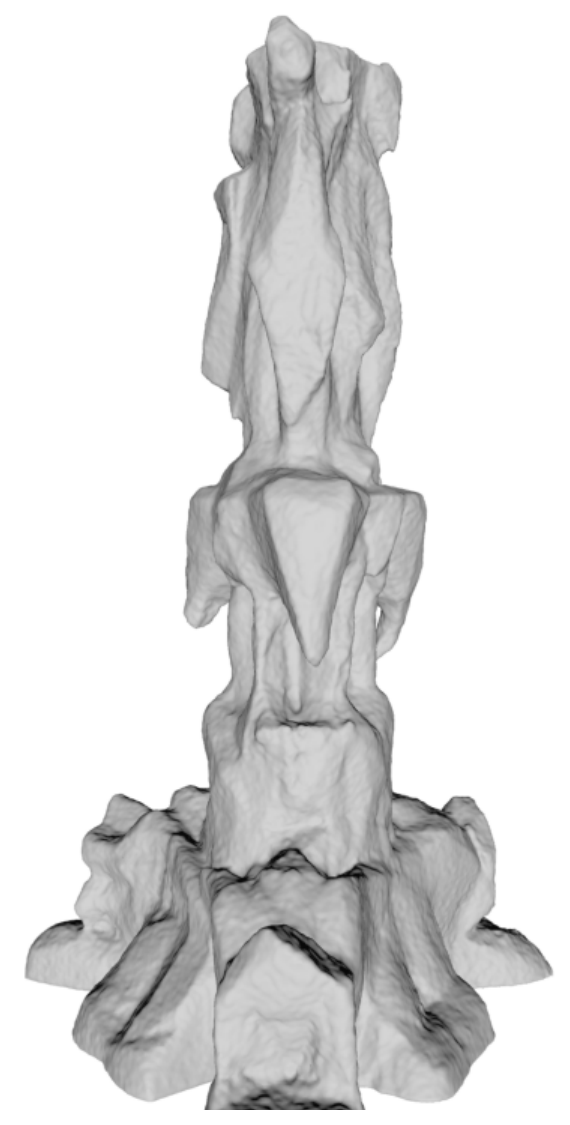}}%
     \subfloat[\centering PE:\\97.50]
     {\includegraphics[width=0.24\linewidth]{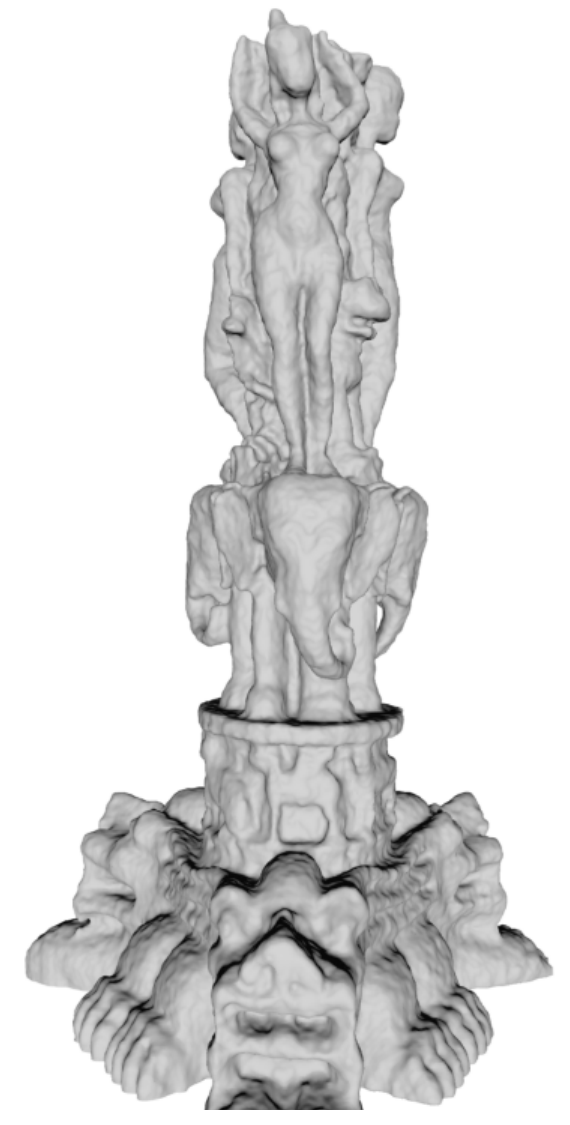}}%
     \subfloat[\centering SIREN:\\96.36]
     {\includegraphics[width=0.24\linewidth]{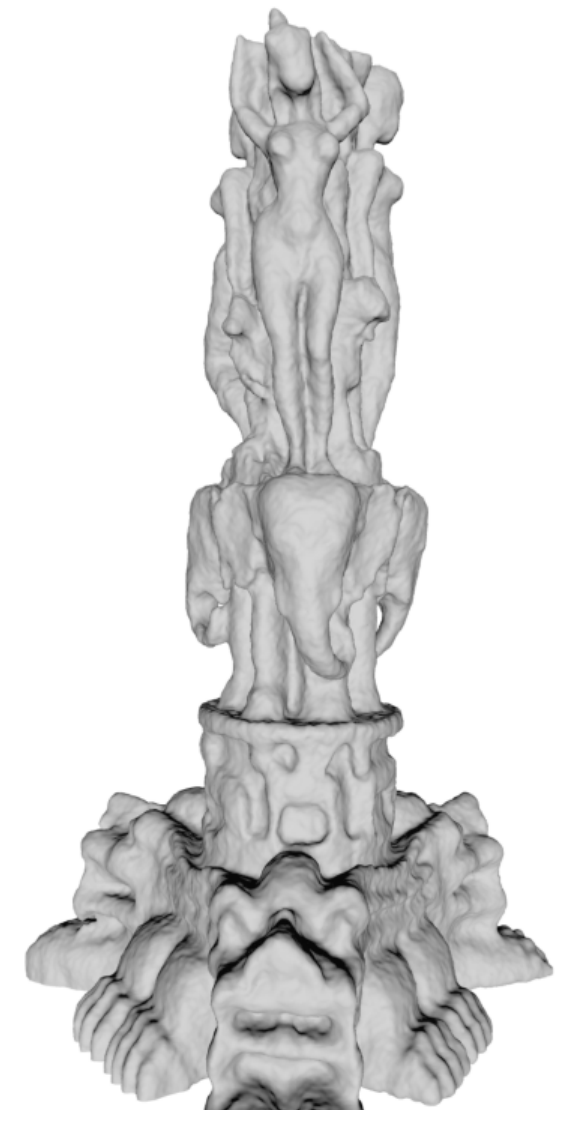}}%
     \subfloat[\centering SIREN \\(first layer):\\97.01]{\includegraphics[width=0.24\linewidth]{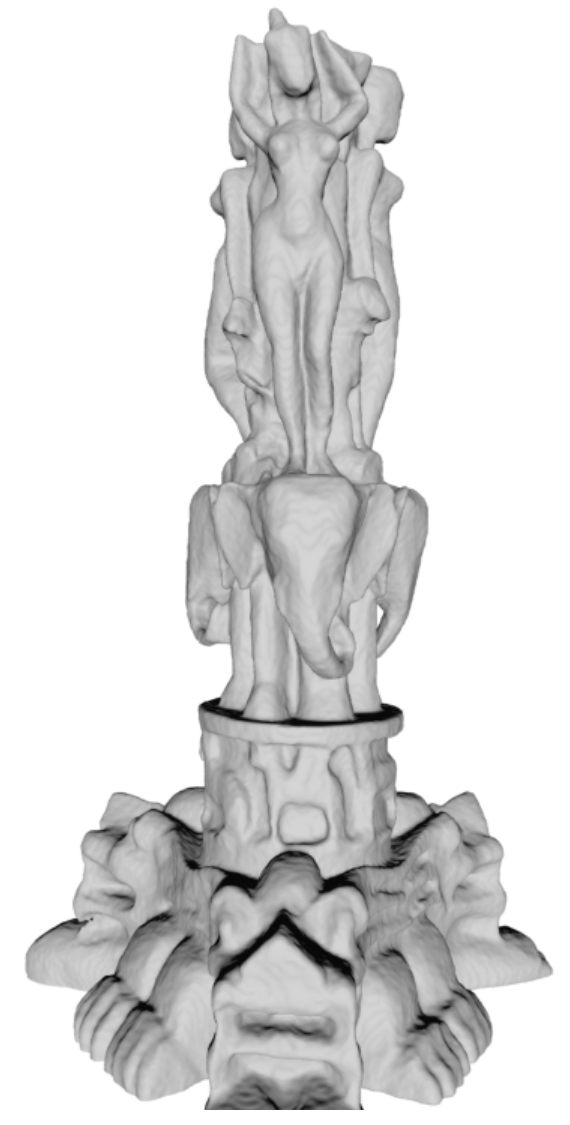}}%
     \quad
     \subfloat[\centering BN:\\95.17]
     {\includegraphics[width=0.24\linewidth]{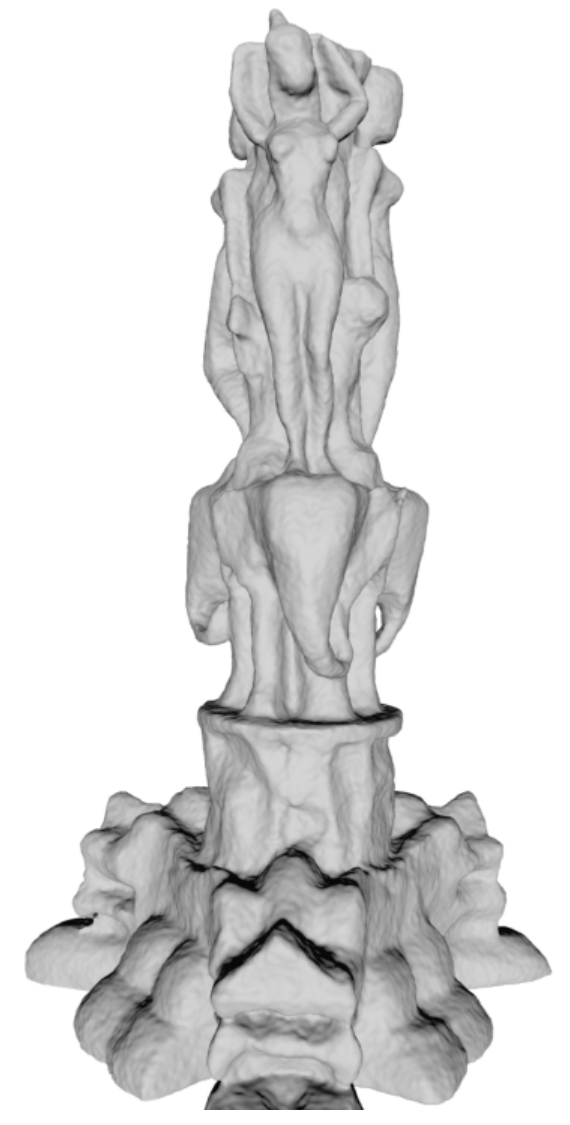}}%
     \subfloat[\centering BN \\(first layer):\\96.15]
     {\includegraphics[width=0.24\linewidth]{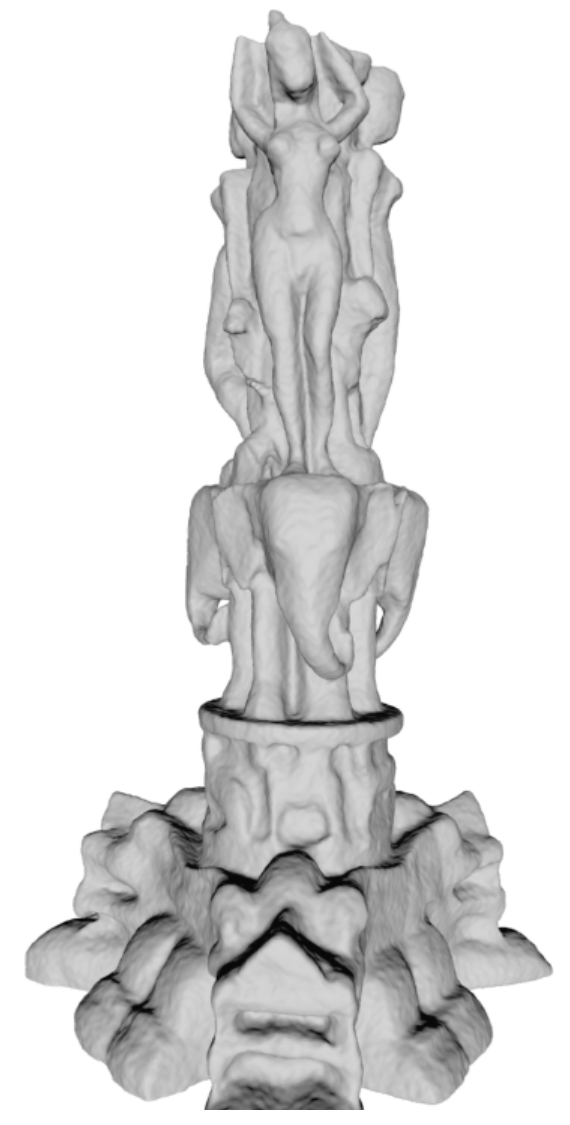}}%
     \subfloat[\centering ReLU \\(our init):\\95.75]
     {\includegraphics[width=0.24\linewidth]{imgs/3d_posenc.pdf}}%
    \caption{Meshes reconstructed from occupancy volumes using different methods,
    along with their intersection-over-union (IoU) scores.}%
    \label{fig:3d_occupancy/first}%
\end{figure}

\section{Experiments}
\label{sec:exp/first}

We evaluate the proposed rank-expanding initialization across three dimensions: spectral analysis of the training manifold, 2-D image reconstruction, and 3-D occupancy modeling. Our goal is to verify that restoring inlet rank is sufficient to unlock the latent capacity of standard ReLU MLPs.

\subsection{Spectral Analysis: Embedding and NTK Rank}

To validate our core claim, we analyze the numerical rank of input embeddings and their corresponding Neural Tangent Kernels (NTK) on a $256 \times 256$ grid (width $256$). We compare our initialization against default ReLU, Positional Encoding (PE), Batch Normalization (BN), and SIREN.

As shown in \cref{fig:svd_embedding_ntk/first}A, our initialization (purple) substantially lifts the singular value spectrum of the inlet embedding, matching the behavior of architectural remedies like PE and SIREN. More importantly, \cref{fig:svd_embedding_ntk/first}B reveals that this first-layer intervention significantly improves the conditioning of the global NTK. This spectral alignment confirms that the "Inlet Rank Collapse" is a primary driver of poor NTK conditioning in vanilla INRs, and that our pure-initialization remedy effectively resolves it.

\subsection{2-D Image Representation}

\begin{table}[t]
\caption[$2$-D image representation PSRN of a collection of 16 images from Div2k.]{$2$-D image representation PSNR averaged over 16 images from the DIV2K
dataset~\cite{Agustsson_2017_CVPR_Workshops}.
``BN'' denotes batch normalization.
``First layer'' indicates that the global method (SIREN or BN) is applied only
to the first layer, while all subsequent layers remain standard ReLU layers.}
\centering
\begin{adjustbox}{width=1.0\linewidth}
\begin{tabular}
{@{}lcccccc@{}}
\toprule
& \thead{\normalsize Method} & \thead{\normalsize ReLU\\(default init)} & \thead{\normalsize PE} & \thead{\normalsize SIREN} & \thead{\normalsize  BN} & \thead{\normalsize ReLU \\(our init)} \\
\midrule
& \thead{\normalsize default\\setting} &23.57 & 27.15 & 27.29  & 25.04 & 25.61 \\
& \thead{\normalsize first\\layer} & -- & -- & 26.15 & 25.82 & -- \\
\bottomrule
\end{tabular}
\label{tab:2d_image/first}
\end{adjustbox}
\end{table}

We evaluate reconstruction performance on 16 images from the DIV2K dataset \cite{Agustsson_2017_CVPR_Workshops}. As reported in \cref{tab:2d_image/first}, applying global methods (SIREN, BN) only to the first layer yields performance nearly identical to their full-network counterparts. 

Notably, our initialization, which introduces zero architectural changes, improves the baseline ReLU PSNR by approximately $2$~dB. This result underscores that the "dormant" parameters in a ReLU MLP are not inherently incapable of representing high frequencies; they simply require a high-rank signal from the inlet to be effectively utilized.

\subsection{3-D Occupancy Representation}

Finally, we test the scalability of our approach on 3-D surface reconstruction. As illustrated in \cref{fig:3d_occupancy/first}, a default ReLU MLP fails to capture complex geometry, resulting in oversmoothed surfaces and low IoU scores. 

By contrast, our initialization enables the same ReLU architecture to recover high-fidelity meshes, achieving competitive IoU scores comparable to SIREN and PE. Consistent with our 2-D findings, the performance of SIREN and BN remains robust even when restricted to the first layer. These results provide strong empirical evidence for the \textbf{Inlet Dominance Effect}: the success of an INR is largely determined by how it handles the coordinate embedding at the very first layer.

\section{Conclusion}

In this work, we introduced a structural diagnostic framework that localizes optimization bottlenecks through layer-wise NTK dissection. Our analysis identifies the Inlet Rank Collapse as the primary obstacle in coordinate-based MLPs: the low-dimensional input create a persistent rank deficiency at the first layer, which acts as a gatekeeper that "starves" subsequent parameters of gradient information.

This framework offers a unified perspective where diverse heuristics, such as Positional Encoding, SIREN, and Batch Normalization, are identified as sharing a common underlying mechanism: the restoration of representation rank. Guided by this insight, we proposed Rank-Expanding Initialization, a minimalist and zero-overhead remedy that unlocks the latent capacity of standard ReLU MLPs without architectural modifications. Our empirical results confirm that resolving rank collapse at its source is sufficient to achieve high-fidelity representation. Beyond INRs, the structural diagnostic paradigm introduced here offers a promising avenue for optimizing information flow in broader neural architectures such as Transformers and CNNs by ensuring rank consistency throughout the network.

\section*{Impact Statement}


This paper presents work whose goal is to advance the field of Machine
Learning. There are many potential societal consequences of our work, none
which we feel must be specifically highlighted here.


\nocite{langley00}

\bibliography{example_paper}
\bibliographystyle{icml2026}

\newpage
\appendix
\onecolumn



\section{Proofs for Jacobians and Gradients}
\label{appendix:proofs}

\subsection{Proof of Lemma~\ref{lem:vec_linear} and Corollary~\ref{cor:G_L_k_form}}
\label{sec:proof_jac}

\begin{proof}
For a linear layer $\mathbf{Z}_{i} = \mathbf{Z}_{i-1} \mathbf{W}_{i}$, we apply the standard vectorization identity $\mathrm{vec}(\mathbf{AXB}) = (\mathbf{B}^{\top} \otimes \mathbf{A})\mathrm{vec}(\mathbf{X})$. 
Setting $\mathbf{A} = \mathbf{I}_N$, $\mathbf{X} = \mathbf{Z}_{i-1}$, and $\mathbf{B} = \mathbf{W}_i$, we obtain:
\begin{equation}
    \mathrm{vec}(\mathbf{Z}_i) = (\mathbf{W}_i^{\top} \otimes \mathbf{I}_N) \mathrm{vec}(\mathbf{Z}_{i-1}).
\end{equation}
Taking the derivative with respect to $\mathrm{vec}(\mathbf{Z}_{i-1})$ yields the layer-wise Jacobian $\mathbf{J}_i = \mathbf{W}_i^{\top} \otimes \mathbf{I}_N$. Similarly, setting $\mathbf{B} = \mathbf{I}_D$, $\mathbf{X} = \mathbf{W}_i$, and $\mathbf{A} = \mathbf{Z}_{i-1}$, we have:
\begin{equation}
    \mathrm{vec}(\mathbf{Z}_i) = (\mathbf{I}_D \otimes \mathbf{Z}_{i-1}) \mathrm{vec}(\mathbf{W}_i),
\end{equation}
which yields the parameter gradient $\mathbf{G}_i = \mathbf{I}_D \otimes \mathbf{Z}_{i-1}$. 

For the $L$-layer linear MLP, the gradient of the final output with respect to the $k$-th layer weights is derived via the chain rule:
\begin{equation}
    \mathbf{G}^{L}_{k} = \frac{\partial \mathrm{vec}(\mathbf{Z}_L)}{\partial \mathrm{vec}(\mathbf{W}_k)} = \left( \prod_{i=L}^{k+1} \frac{\partial \mathrm{vec}(\mathbf{Z}_i)}{\partial \mathrm{vec}(\mathbf{Z}_{i-1})} \right) \frac{\partial \mathrm{vec}(\mathbf{Z}_k)}{\partial \mathrm{vec}(\mathbf{W}_k)}.
\end{equation}
Substituting the expressions for $\mathbf{J}_i$ and $\mathbf{G}_k$:
\begin{equation}
    \mathbf{G}^{L}_{k} = \left( \prod_{i=L}^{k+1} (\mathbf{W}_i^{\top} \otimes \mathbf{I}_N) \right) (\mathbf{I}_D \otimes \mathbf{Z}_{k-1}) = \left( \prod_{i=L}^{k+1} \mathbf{W}_i \right)^{\top} \otimes \mathbf{Z}_{k-1}.
\end{equation}
The last step uses the mixed-product property $(\mathbf{A} \otimes \mathbf{B})(\mathbf{C} \otimes \mathbf{D}) = (\mathbf{AC} \otimes \mathbf{BD})$ and the fact that $(\prod \mathbf{M}_i)^\top = \prod \mathbf{M}_i^\top$ in the context of these Jacobians. This completes the proof.
\end{proof}

\subsection{Jacobians for Nonlinear Activations and Batch Normalization}

In this section, we extend the linear analysis to networks incorporating nonlinear activation functions and Batch Normalization.

\begin{corollary}
\label{cor:J_relu/first}
For a network with element-wise nonlinear activations $\rho$ (e.g., ReLU), the Jacobians and gradients satisfy:
\begin{equation}
\label{equ:J_relu/first}
\begin{aligned}
    \mathbf{J}_i &= \mathbf{D}^{A}_{i}(\mathbf{W}_{i}^{\top} \otimes \mathbf{I}_{N}), \\
    \mathbf{G}_i &= \mathbf{D}^{A}_{i}(\mathbf{I}_{D} \otimes \mathbf{Z}_{i-1}), \\
    \mathbf{G}^{L}_{k} &= \left( \prod_{i=L}^{k+1} \mathbf{D}^{A}_{i}(\mathbf{W}_{i}^{\top} \otimes \mathbf{I}_{N}) \right) \mathbf{D}^{A}_{k}(\mathbf{I}_{D} \otimes \mathbf{Z}_{k-1}),
\end{aligned}
\end{equation}
where $\mathbf{D}^{A}_{i} := \mathrm{diag}(\mathrm{vec}(\rho'(\mathbf{Z}_{i-1}\mathbf{W}_i)))$ is a diagonal matrix of activation derivatives.
\end{corollary}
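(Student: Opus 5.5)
The plan is to reduce everything to the linear case already handled in the proof of \cref{lem:vec_linear} and \cref{cor:G_L_k_form}, together with one application of the chain rule for the elementwise nonlinearity. Write the $i$-th layer as the composition of a pre-activation map $\mathbf{A}_i = \mathbf{Z}_{i-1}\mathbf{W}_i$ and an activation $\mathbf{Z}_i = \rho(\mathbf{A}_i)$. The bias is omitted as in the statement; it only shifts $\mathbf{A}_i$ and does not change any derivative with respect to $\mathbf{Z}_{i-1}$ or $\mathbf{W}_i$. Applying the chain rule to this composition gives
\[
\frac{\partial\,\mathrm{vec}(\mathbf{Z}_i)}{\partial\,\mathrm{vec}(\mathbf{Z}_{i-1})} = \frac{\partial\,\mathrm{vec}(\mathbf{Z}_i)}{\partial\,\mathrm{vec}(\mathbf{A}_i)}\,\frac{\partial\,\mathrm{vec}(\mathbf{A}_i)}{\partial\,\mathrm{vec}(\mathbf{Z}_{i-1})},
\]
and analogously for $\mathrm{vec}(\mathbf{W}_i)$.

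\textbf{Step 1: the activation Jacobian is diagonal.} Since $\rho$ acts entrywise, the $(r,s)$ entry of $\partial\,\mathrm{vec}(\mathbf{Z}_i)/\partial\,\mathrm{vec}(\mathbf{A}_i)$ equals $\rho'(\mathrm{vec}(\mathbf{A}_i)_r)\,\delta_{rs}$. Hence this Jacobian is exactly $\mathbf{D}^A_i = \mathrm{diag}(\mathrm{vec}(\rho'(\mathbf{Z}_{i-1}\mathbf{W}_i)))$. For ReLU, $\rho'$ is undefined at $0$, so I will adopt the standard convention of taking any fixed value there, e.g. $\rho'(0)=0$. This is the only point requiring care. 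I expect it to be the main, though mild, obstacle: one must say the identity holds almost everywhere, or with respect to the chosen subgradient convention, since pre-activations vanish on a measure-zero set.

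\textbf{Step 2: the pre-activation Jacobians.} The linear factors are exactly those computed in the proof of \cref{lem:vec_linear}: $\partial\,\mathrm{vec}(\mathbf{A}_i)/\partial\,\mathrm{vec}(\mathbf{Z}_{i-1}) = \mathbf{W}_i^\top\otimes\mathbf{I}_N$ and $\partial\,\mathrm{vec}(\mathbf{A}_i)/\partial\,\mathrm{vec}(\mathbf{W}_i) = \mathbf{I}_D\otimes\mathbf{Z}_{i-1}$. Multiplying each on the left by $\mathbf{D}^A_i$ gives the first two lines of \cref{equ:J_relu/first}.

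\textbf{Step 3: the full gradient.} The third line follows by substituting these into the chain decomposition \cref{equ:chain}. The key observation is that $\mathbf{W}_k$ affects $\mathbf{Z}_L$ only through $\mathbf{Z}_k$, so the ordered product $\mathbf{J}_L\cdots\mathbf{J}_{k+1}\mathbf{G}_k$ is exactly the chain rule along the layer sequence. Unlike in \cref{cor:G_L_k_form}, I will not attempt to collapse the product via the mixed-product property: the interleaved diagonal matrices $\mathbf{D}^A_i$ are not of Kronecker form, so the expression is left in factored form, as the statement does.
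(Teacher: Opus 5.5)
Your proposal is correct and follows essentially the same route as the paper's proof. You split each layer into the pre-activation $\mathbf{Z}_{i-1}\mathbf{W}_i$ and the elementwise $\rho$, identify the latter's Jacobian as the diagonal $\mathbf{D}^{A}_{i}$, reuse the linear Kronecker factors from Lemma~\ref{lem:vec_linear}, and chain these along the layers; you are more explicit than the paper about $\mathbf{G}_i$ and about the ReLU kink at $0$. One small correction to your aside: a bias leaves the linear factors unchanged, but it does change $\mathbf{D}^{A}_{i}$, which would then have to be evaluated at the shifted pre-activation $\mathbf{Z}_{i-1}\mathbf{W}_i+\mathbf{b}_i$ rather than at $\mathbf{Z}_{i-1}\mathbf{W}_i$, so omitting it is exactly what makes the stated formula hold.
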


\begin{proof}
Let $\mathbf{Y}_i = \mathbf{Z}_{i-1}\mathbf{W}_i$ and $\mathbf{Z}_i = \rho(\mathbf{Y}_i)$. Since $\rho$ is element-wise, its Jacobian $\partial \mathrm{vec}(\mathbf{Z}_i) / \partial \mathrm{vec}(\mathbf{Y}_i)$ is a diagonal matrix $\mathbf{D}^A_i$. By the chain rule:
\begin{equation}
    \frac{\partial \mathrm{vec}(\mathbf{Z}_i)}{\partial \mathrm{vec}(\mathbf{Z}_{i-1})} = \frac{\partial \mathrm{vec}(\mathbf{Z}_i)}{\partial \mathrm{vec}(\mathbf{Y}_i)} \frac{\partial \mathrm{vec}(\mathbf{Y}_i)}{\partial \mathrm{vec}(\mathbf{Z}_{i-1})} = \mathbf{D}^A_i (\mathbf{W}_i^{\top} \otimes \mathbf{I}_N).
\end{equation}
Applying this recursively yields the cascaded expression for $\mathbf{G}^L_k$.
\end{proof}

\begin{corollary}
\label{corollary:J_bn/first}
For a network using both ReLU activation and Batch Normalization (BN), the Jacobians are:
\begin{equation}
\label{equ:J_bn/first}
\begin{aligned}
    \mathbf{J}_i &= (\mathbf{D}^{B}_{i} \otimes \mathbf{I}_{N})\mathbf{D}^{A}_{i}(\mathbf{W}_{i}^{\top} \otimes \mathbf{I}_{N}), \\
    \mathbf{G}_i &= (\mathbf{D}^{B}_{i} \otimes \mathbf{I}_{N})\mathbf{D}^{A}_{i}(\mathbf{I}_{D} \otimes \mathbf{Z}_{i-1}), \\
    \mathbf{G}^{L}_{k} &= \left( \prod_{i=L}^{k+1} (\mathbf{D}^{B}_{i} \otimes \mathbf{I}_{N}) \mathbf{D}^{A}_{i} (\mathbf{W}_{i}^{\top} \otimes \mathbf{I}_{N}) \right) (\mathbf{D}^{B}_{k} \otimes \mathbf{I}_{N}) \mathbf{D}^{A}_{k} (\mathbf{I}_{D} \otimes \mathbf{Z}_{k-1}),
\end{aligned}
\end{equation}
where $\mathbf{D}^{B}_{i}$ is a diagonal matrix encoding BN gradients (e.g., scaling by $\gamma/\sigma$).
\end{corollary}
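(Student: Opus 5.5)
We treat \cref{corollary:J_bn/first} as a direct extension of the proof of \cref{cor:J_relu/first}: factor each layer into elementary maps and compose their Jacobians by the chain rule. For layer $i$ we write the forward pass as
\begin{equation*}
\Y_i = \Z_{i-1}\W_i, \qquad \mathbf{A}_i = \rho(\Y_i), \qquad \Z_i = \mathrm{BN}(\mathbf{A}_i).
\end{equation*}
The paper's convention lists the BN factor to the left of $\mathbf{D}^A_i$, so BN is applied after the activation. The two linear-map derivatives come from \cref{lem:vec_linear}:
\begin{equation*}
\partial\,\mathrm{vec}(\Y_i)/\partial\,\mathrm{vec}(\Z_{i-1}) = \W_i^\top\otimes\I_N, \qquad \partial\,\mathrm{vec}(\Y_i)/\partial\,\mathrm{vec}(\W_i) = \I_D\otimes\Z_{i-1}.
\end{equation*}
The elementwise activation contributes the diagonal factor $\mathbf{D}^A_i$, exactly as in \cref{cor:J_relu/first}.

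The new ingredient is the Jacobian of BN with respect to $\mathrm{vec}(\mathbf{A}_i)$. BN acts column-wise, i.e.\ per feature across the $N$ samples: column $d$ is mapped to $\gamma_d(\mathbf{a}_d-\mu_d\mathbf{1})/\sigma_d+\beta_d$. Since $\mathrm{vec}$ stacks columns, the full Jacobian is block diagonal with $D$ blocks of size $N\times N$. The $d$-th block is
\begin{equation*}
\frac{\gamma_d}{\sigma_d}\left(\I_N - \tfrac{1}{N}\mathbf{1}\mathbf{1}^\top - \tfrac{1}{N}\hat{\mathbf{a}}_d\hat{\mathbf{a}}_d^\top\right),
\end{equation*}
where $\hat{\mathbf{a}}_d$ is the normalized column. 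If we treat the batch statistics $\mu_d,\sigma_d$ as fixed (stop-gradient), each block reduces to $(\gamma_d/\sigma_d)\I_N$. The Jacobian is then $\mathbf{D}^B_i\otimes\I_N$ with $\mathbf{D}^B_i=\mathrm{diag}(\gamma_d/\sigma_d)$, which matches the statement's description ``scaling by $\gamma/\sigma$''. This step is the main obstacle. We will state the frozen-statistics convention explicitly, note that it is the regime the corollary intends, and remark that the exact Jacobian differs only by the rank-$\le 2$ projection correction in each block. That correction preserves the Kronecker/block structure the paper relies on.

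With these pieces, the chain rule gives
\begin{equation*}
\J_i = (\mathbf{D}^B_i\otimes\I_N)\,\mathbf{D}^A_i\,(\W_i^\top\otimes\I_N), \qquad \G_i = (\mathbf{D}^B_i\otimes\I_N)\,\mathbf{D}^A_i\,(\I_D\otimes\Z_{i-1}).
\end{equation*}
For $\G_i$ we differentiate through $\Y_i$ with respect to $\W_i$, since BN and $\rho$ do not depend on $\W_i$ except through $\Y_i$. Finally, we substitute these into the layerwise decomposition \cref{equ:chain}, $\G^L_k = (\prod_{i=L}^{k+1}\J_i)\G_k$, to get the stated cascaded form of $\G^L_k$. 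A short induction on $L-k$ would make the ordering of the product explicit.
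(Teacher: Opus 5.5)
Your proposal is correct and follows the paper's route: split each block into linear map, elementwise activation, and BN, take the BN Jacobian to be $\mathbf{D}^{B}_{i}\otimes\mathbf{I}_{N}$, and compose by the chain rule. Your explicit stop-gradient convention is the more precise justification, because the paper only says it ignores mean-centering, and that alone would still leave the variance term $-\tfrac{1}{N}\hat{\mathbf{a}}_d\hat{\mathbf{a}}_d^{\top}$ in each block.

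One small correction to your side remark: the exact correction keeps the block-diagonal structure but not the Kronecker form $\mathbf{D}^{B}_{i}\otimes\mathbf{I}_{N}$, since $\hat{\mathbf{a}}_d$ varies with the feature $d$. Only the mean-centering part factors, as $\mathbf{D}^{B}_{i}\otimes(\mathbf{I}_{N}-\tfrac{1}{N}\mathbf{1}\mathbf{1}^{\top})$.
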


\begin{proof}
BN operates uniformly across the sample dimension $N$. Its gradient with respect to the input (ignoring the mean-centering for brevity) can be represented as $(\mathbf{D}^B_i \otimes \mathbf{I}_N)$, where $\mathbf{D}^B_i$ scales each feature channel. The total Jacobian for one block (Linear + Activation + BN) is the product of these individual components in reverse order of the forward pass.
\end{proof}


\end{document}

%% file: preamble.tex
\usepackage[utf8]{inputenc} 
\usepackage[T1]{fontenc}    
\usepackage{url}            
\usepackage{booktabs}       
\usepackage{amsfonts}       
\usepackage{nicefrac}       
\usepackage{microtype}      
\usepackage{xcolor}         

\usepackage{pifont}
\usepackage{fontawesome}
\usepackage{transparent} 
\usepackage{amsmath}

\usepackage{amssymb}
\usepackage{bm}
\usepackage{adjustbox}
\usepackage{dashrule}
\usepackage{mathtools}
\usepackage{makecell}
\usepackage{colortbl}
\usepackage{graphicx}
\usepackage{multirow}

\usepackage{wrapfig}
\usepackage{float}
\usepackage{enumitem}
\usepackage{dsfont}
\usepackage{array}
\newcolumntype{x}[1]{>{\centering\let\newline\\\arraybackslash\hspace{0pt}}p{#1}}

\definecolor{blueblack}{RGB}{0, 108, 173}
\definecolor{taborange}{RGB}{235, 127, 14}
\definecolor{tabgreen}{RGB}{30, 160, 30}
\definecolor{tabpurple}{RGB}{128, 103, 189}
\definecolor{tabred}{RGB}{214, 39, 40}
\definecolor{tabpink}{RGB}{240, 20, 255}
\definecolor{tabblue}{RGB}{14, 22, 255}
\definecolor{blueviolet}{RGB}{138,43,226}
\definecolor{forestgreen}{RGB}{34,139,34}

\definecolor{sol_light_blue}{RGB}{38, 139, 210}
\definecolor{sol_blue}{RGB}{38, 139, 210}
\definecolor{nord_blue}{RGB}{38, 139, 210}
\definecolor{sol_green}{RGB}{163, 190, 140}
\definecolor{sol_red}{RGB}{220, 50, 47}
\definecolor{nord_red}{RGB}{250, 190, 192}
\definecolor{nord_green}{RGB}{182, 215, 168}
\definecolor{nord_yellow}{RGB}{255, 229, 153}

\definecolor{beer_orange}{RGB}{242, 142, 28}

\definecolor{blueblack}{RGB}{0, 108, 173}
\definecolor{taborange}{RGB}{235, 127, 14}
\definecolor{tabgreen}{RGB}{30, 160, 30}
\definecolor{tabgray}{RGB}{142, 142, 142}
\definecolor{tabpurple}{RGB}{128, 103, 189}
\definecolor{tabred}{RGB}{214, 39, 40}

\newcommand{\I}{\mathbf{I}}

\newcommand{\x}{\mathbf{x}}

\newcommand{\Q}{\mathbf{Q}}
\newcommand{\K}{\mathbf{K}}

\newcommand{\Z}{\mathbf{Z}}

\newcommand{\X}{\mathbf{X}}

\newcommand{\Y}{\mathbf{Y}}
\newcommand{\W}{\mathbf{W}}

\newcommand{\G}{\mathbf{G}}
\newcommand{\V}{\mathbf{V}}

\renewcommand{\V}{\mathbf{V}}

\newcommand{\J}{\mathbf{J}}

\RequirePackage{xspace}
\makeatletter
\DeclareRobustCommand\onedot{\futurelet\@let@token\@onedot}
\def\@onedot{\ifx\@let@token.\else.\null\fi\xspace}

\makeatother

\usepackage{tcolorbox}

\newtcolorbox{keyobservation}[1][]{
  colback=blue!5,
  colframe=blue!60!black,
  fonttitle=\bfseries,
  title=Key Observation,
  sharp corners,
  boxrule=0.8pt,
  left=1em,
  right=1em,
  top=0.8em,
  bottom=0.8em,
  #1
}

\newtcolorbox{softbox}{
  colback=blue!5,
  colframe=blue!60!black,
  boxrule=0.6pt,
  sharp corners,
  left=1em,
  right=1em,
  top=0.8em,
  bottom=0.8em
}

\newtcolorbox{notebox}{
  colback=blue!4,
  colframe=blue!25,
  boxrule=0.6pt,
  arc=4pt,
  left=1em,
  right=1em,
  top=0.8em,
  bottom=0.8em
}